\tikzset{
  -{Latex[width=1.5mm,length=2mm]},line width=0.25mm,
    auto,node distance =1 cm,semithick,
    vertex/.style ={circle, draw, minimum width = 0.6 cm},
    point/.style = {circle, draw, inner sep=0.04cm,fill,node contents={}},
    bidirected/.style={Latex-Latex,dashed},
    el/.style = {inner sep=2pt, align=left, sloped},
    font=\scriptsize\sffamily\sansmath
}
\newcommand*\circled[1]{\tikz[baseline=(char.base)]{
            \node[shape=circle,draw,inner sep=2pt] (char) {#1};}}
\newtheorem{lemma}{Lemma}[section]
\newtheorem{definition}{Definition}[section]
\newcommand{\E}{{\rm I\kern-.3em E}}
\newcommand{\indep}{\perp\!\!\!\perp}
\Crefname{equation}{Eq.}{Eqs.}
\Crefname{figure}{Fig.}{Figs.}
\Crefname{tabular}{Tab.}{Tabs.}
\Crefname{theorem}{Thm.}{Thms.}
\Crefname{lemma}{Lem.}{Lems.}
\Crefname{proposition}{Prop.}{Props.}
\Crefname{definition}{Def.}{Defs.}
\Crefname{algorithm}{Alg.}{Algs.}
\Crefname{corollary}{Corol.}{Corol.}
\Crefname{section}{Sec.}{Sec.}
\DeclareMathOperator{\before}{before}
\DeclareMathOperator{\after}{after}
\newcommand{\mli}[1]{\mathit{#1}}
\newcommand{\doo}{\text{do}}
\newcommand{\Pa}{\mli{Pa}}
\newcommand{\pa}{\mli{pa}}
\newcommand{\PA}{\mli{PA}}
\newcommand{\De}{\mli{De}}
\newcommand{\An}{\mli{An}}
\newcommand{\an}{\mli{an}}
\newcommand{\de}{\mli{de}}
\newcommand{\Ch}{\mli{Ch}}
\newcommand{\ch}{\mli{ch}}
\def\*#1{\boldsymbol{#1}}
\def\1#1{\mathcal{#1}}
\def\2#1{\mathscr{#1}}
\def\3#1{\mathbb{#1}}
\def\4#1{\mathds{#1}}
\def\5#1{\vec{\*#1}}
\Crefname{equation}{Eq.}{Eqs.}
\Crefname{figure}{Fig.}{Figs.}
\Crefname{tabular}{Tab.}{Tabs.}
\Crefname{theorem}{Thm.}{Thms.}
\Crefname{lemma}{Lem.}{Lems.}
\Crefname{proposition}{Prop.}{Props.}
\Crefname{definition}{Def.}{Defs.}
\Crefname{algorithm}{Alg.}{Algs.}
\Crefname{corollary}{Corol.}{Corol.}
\Crefname{section}{Sec.}{Sec.}
\title{Sequential Causal Imitation Learning with Unobserved Confounders}
\author{
  Daniel Kumor\\
  Purdue University\\
  \texttt{dkumor@purdue.edu} \\
  \And
  Junzhe Zhang \\
  Columbia University \\
  \texttt{junzhez@cs.columbia.edu} \\
  \And
  Elias Bareinboim \\
  Columbia University \\
  \texttt{eb@cs.columbia.edu} \\
}
\begin{document}

\maketitle

% \begin{abstract}
%   ``Monkey see monkey do" is an age-old adage, referring to na\"ive imitation without a deep understanding of a system's underlying mechanics.
%   Indeed, if a demonstrator has access to information unavailable to the imitator (monkey), such as additional or different sensors,
%   then no matter how perfectly the imitator models its perceived environment (\textsc{See}), attempting to directly reproduce the demonstrator's behavior (\textsc{Do})
%   can lead to poor outcomes. \citet{zhangCausalImitationLearning2020} showed that this problem can be tackled with the tools of causal inference. However, their solution is limited to single-stage decision-making. This paper investigates the problem of causal imitation learning in the more general sequential setting where the imitator must make multiple decisions per episode. We provide a graphical condition that is complete for determining the feasibility of perfect imitation, i.e., the imitator is able to match the demonstrator's performance in a sequential setting, as well as an efficient algorithm for determining such imitability.}
% \end{abstract}

\begin{abstract}
  ``Monkey see monkey do" is an age-old adage, referring to na\"ive imitation without a deep understanding of a system's underlying mechanics.
  Indeed, if a demonstrator has access to information unavailable to the imitator (monkey), such as a different set of sensors,
  then no matter how perfectly the imitator models its perceived environment (\textsc{See}), attempting to reproduce the demonstrator's behavior (\textsc{Do})
  can lead to poor outcomes.
  Imitation learning in the presence of a mismatch between demonstrator and imitator has been studied in the literature under the rubric of causal imitation learning (Zhang et al., 2020), but existing solutions are limited to single-stage decision-making. This paper investigates the problem of causal imitation learning in sequential settings, where the imitator must make multiple decisions per episode. We develop a graphical criterion that is necessary and sufficient for determining the feasibility of causal imitation, providing conditions when an imitator can match a demonstrator's performance despite differing capabilities. Finally, we provide an efficient algorithm for determining imitability and corroborate our theory with simulations.
\end{abstract}

\section{Introduction}
\label{sec:intro}

Without access to observational data, an agent must learn how to operate at a suitable level of performance through trial and error \cite{sutton1998reinforcement,mnihPlayingAtariDeep2013}. This from-scratch approach is often impractical in environments with the potential of extreme negative - and final - outcomes (driving off a cliff).
While both Nature and machine learning researchers have approached the problem from a wide variety of perspectives, a particularly potent method which has been used with great success in many learning machines, including humans, is exploiting observations of other agents in the environment \cite{rizzolatti2004mirror,hussein2017imitation}.

%Unlike in toy environments, where an agent is free to learn behavior from scratch through vast amounts of trial and error \cite{sutton1998reinforcement,mnihPlayingAtariDeep2013}, such an approach is often impractical in the real world. Realistic situations can allow actions that lead to extreme negative outcomes (driving off a cliff), and can have sparse rewards which require successfully enacting complex behaviors.
%While both Nature and machine learning researchers have approached the problem from a wide variety of perspectives, a particularly potent method which has been used with great success in many learning machines, including humans, is exploiting observations of other agents in the environment.

Learning to act by observing other agents offers a data multiplier, allowing agents to take into account others' experiences. Even when the precise loss function is unknown (what exactly goes into being a good driver?), the agent can attempt to learn from ``experts'', namely agents which are known to gain an acceptable reward at the target task. This approach has been studied under the umbrella of \textit{imitation learning} \cite{argall2009survey,billard2008survey,hussein2017imitation,osa2018algorithmic}. Several methods have been proposed, including \textit{inverse reinforcement learning} \cite{ng2000algorithms,abbeel2004apprenticeship,syed2008game,ziebart2008maximum} and \textit{behavior cloning} \cite{widrow1964pattern,pomerleau1989alvinn,muller2006off,mulling2013learning,mahler2017learning}.  The former attempts to reconstruct the loss/reward function that the experts minimize and then use it for optimization; the latter directly copies the expert's actions (behavior cloning). 

Despite the power entailed by this approach, it relies on a somewhat stringent condition: the expert and imitator's sensory capabilities need to be perfectly matched. As an example, self-driving cars rely solely on cameras or lidar, completely ignoring the auditory dimension - and yet most human demonstrators are able to exploit this data, especially in dangerous situations (car horns, screeching tires). Perhaps without a microphone, the self-driving car would incorrectly attribute certain behaviors to visual stimuli, leading to a poor policy?
For concreteness, consider the scenario shown in \cref{fig:audiocar}, where the human driver ($X$, i.e., the demonstrator, in {\color{blue}blue}) is looking forward ($F$), and can hear car horns ($H$) from cars behind ($B$), and to the side ($S$). The driver's performance is represented by a variable $Y$ ({\color{red}red}), which is unobserved (dashed node). Since our dataset only contains visual data, car horns $H$ remain unobserved to the learning agent (i.e., the imitator). Despite not being able to hear car horns, the learner from \Cref{fig:audiocar} had a full view of the car's surroundings, including cars behind and to the side, which turns out to be sufficient to perform imitation in this example. Consider an instance where $F, B, S$ are drawn uniformly over $\{0, 1\}$. The reward $Y$ is decided by $\neg X \oplus F \oplus B \oplus S$; $\oplus$ represents the \textit{exclusive-or} operator. The human driver decides the action $X \gets H$ where values of horn $H$ is given by $F \oplus B \oplus S$. Preliminary analysis reveals that the learner could perfectly mimic the demonstrator's decision-making process using an imitating policy $X \gets F \oplus B \oplus S$. On the other hand, if the driving system does not have side cameras, the side view $S$ becomes latent; see \Cref{fig:audiocar_latentside}. The learner's reward $\E[Y|\doo(\pi)]$ is equal to $0.5$ for any policy $\pi(x|f, b)$, which is far from the optimal demonstrator's performance, $\E[Y] = 1$. 

%For concreteness, consider the imitation learning scenario shown in \Cref{fig:audiocar}, where the human driver ($X$, i.e., the demonstrator, in {\color{blue}blue}) is looking forward ($F$), and can hear car horns ($H$) from cars behind ($B$), and to the side ($S$). The driver's performance is represented by a variable $Y$ ({\color{red}red}), which is unobserved (dashed node). Since our dataset only contains visual data, car horns $H$ are also unobserved by the learning agent (i.e., the imitator). Despite not being able to hear car horns, the learner from \Cref{fig:audiocar} had a full view of the car's surroundings, including cars behind and to the side, which turns out to be sufficient to perform imitation in this example. In other words, if the demonstrator $X$ is replaced by the imitator in \Cref{fig:audiocar}, which uses $(F,B,S)$ as inputs to its policy rather than $(F,H)$, it is possible for the imitator to get identical performance on $Y$ to the demonstrator with the correct choice of policy $\pi$. On the other hand, if the driving system does not have side cameras, the side view $S$ becomes latent (\Cref{fig:audiocar_latentside}), and there are no longer guarantees of acceptable imitation without additional assumptions.

Based on these examples, there arises the question of determining precise conditions under which an agent can account for the lack of knowledge or observations available to the expert, and how this knowledge should be combined to generate an optimal imitating policy, giving identical performance as the expert on measure $Y$.
These questions have been recently investigated in the context of \textit{causal imitation learning } \cite{zhangCausalImitationLearning2020}, where a complete graphical condition and algorithm were developed for determining imitability in the single-stage decision-making setting with partially observable models (i.e., in non-Markovian settings). Other structural assumptions, such as linearity \cite{etesami2020causal}, were also explored in the literature, but were still limited to a single action. Finally, \cite{haan2019confusion} explore the case when expert and imitator can observe the same contexts, but the causal diagram is not available.
Despite this progress, it is still unclear how to systematically imitate, or even whether imitation is possible when a learner must make several actions in sequence, where expert and imitator observe differing sets of variables (e.g., \Cref{fig:sequential,fig:nosequential}). %The general reinforcement learning paradigm has agents taking multiple actions in sequence, where their reward is a function of all actions taken in an episode, leading to the question of whether a corresponding criterion can be created that covers both single and multiple actions.

The goal of this paper is to fill this gap in understanding. More specifically, our contributions are as follows. (1) We provide a graphical criterion for determining whether imitability is feasible in sequential settings based on a causal graph encoding the domain's causal structure. (2) We propose an efficient algorithm to determine imitability and to find the policy for each action that leads to proper imitation. (3) We prove that the proposed criterion is complete (i.e. both necessary and sufficient). Finally, we verify that our approach compares favorably with existing methods in contexts where a demonstrator has access to latent variables through simulations. Due to space constraints, proofs are provided in the complete technical report \cite{appendix}.

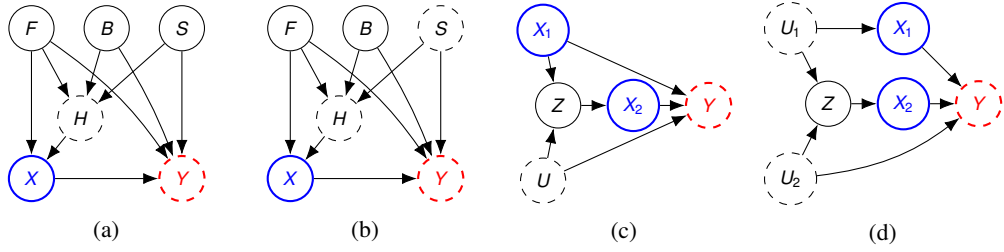
\begin{figure}
    \begin{subfigure}{0.24\linewidth}
        \centering
        \begin{tikzpicture}
            \node[vertex,blue,thick] (x) at (0,0) {$X$};
            \node[vertex,dashed,red,thick] (y) at (2,0) {$Y$};

            \node[vertex] (f) at (0,2) {$F$};
            \node[vertex] (b) at (1,2) {$B$};
            \node[vertex] (s) at (2,2) {$S$};
            \node[vertex,dashed] (h) at (0.65,0.8) {$H$};

            \path (f) edge (x);
            \path[bend left=10] (f) edge (y);
            \path (h) edge (x);
            \path[bend left=5] (b) edge (y);
            \path (s) edge (y);
            \path (b) edge (h);
            \path (s) edge (h);
            \path (x) edge (y);
            \path (f) edge (h);

        \end{tikzpicture}
        \caption{\label{fig:audiocar}}
    \end{subfigure}
    \begin{subfigure}{0.24\linewidth}
        \centering
        \begin{tikzpicture}
            \node[vertex,blue,thick] (x) at (0,0) {$X$};
            \node[vertex,dashed,red,thick] (y) at (2,0) {$Y$};

            \node[vertex] (f) at (0,2) {$F$};
            \node[vertex] (b) at (1,2) {$B$};
            \node[vertex,dashed] (s) at (2,2) {$S$};
            \node[vertex,dashed] (h) at (0.65,0.8) {$H$};

            \path (f) edge (x);
            \path[bend left=10] (f) edge (y);
            \path (h) edge (x);
            \path[bend left=5] (b) edge (y);
            \path (s) edge (y);
            \path (b) edge (h);
            \path (s) edge (h);
            \path (x) edge (y);
            \path (f) edge (h);

        \end{tikzpicture}
        \caption{\label{fig:audiocar_latentside}}
    \end{subfigure}
    \begin{subfigure}{0.24\linewidth}
        \centering
        \begin{tikzpicture}
            \node[vertex,blue,thick] (x1) at (-1.2,1) {$X_1$};
            \node[vertex,blue,thick] (x2) at (0,0) {$X_2$};
            \node[vertex] (z) at (-1,0) {$Z$};
            \node[vertex,dashed] (u) at (-1.2,-1) {$U$};
            \node[vertex,dashed,red,thick] (y) at (1,0) {$Y$};

            \path (x1) edge (y);
            \path (x2) edge (y);
            \path (z) edge (x2);
            \path (u) edge (z);
            \path (u) edge (y);
            \path (x1) edge (z);
        \end{tikzpicture}
        \caption{\label{fig:sequential}}
    \end{subfigure}
    \begin{subfigure}{0.24\linewidth}
        \centering
        \begin{tikzpicture}
            \node[vertex,blue,thick] (x1) at (0,1) {$X_2$};
            \node[vertex,blue,thick] (x2) at (0,2) {$X_1$};
            \node[vertex] (z) at (-1,1) {$Z$};
            \node[vertex,dashed] (u2) at (-1.5,2) {$U_1$};
            \node[vertex,dashed] (u1) at (-1.5,0) {$U_2$};
            \node[vertex,dashed,red,thick] (y) at (1,1) {$Y$};

            \path (x1) edge (y);
            \path (x2) edge (y);
            \path (z) edge (x1);
            \path (u2) edge (z);
            \path (u2) edge (x2);
            \path (u1) edge (z);
            \path (u1) edge[bend right=15] (y);
        \end{tikzpicture}
        \caption{\label{fig:nosequential}}
    \end{subfigure}
    \caption{\label{fig:diagrams}(\subref{fig:audiocar}, \subref{fig:audiocar_latentside}) represents a simplified view of a driver $X$ and surrounding cars $F,B,S$. (\subref{fig:sequential}) is imitable with policies $\pi_1(X_1)=P(X_1)$ and $\pi_2(X_2|Z)=P(X_2|Z)$, but in (\subref{fig:nosequential}) $X_1,X_2$ is not imitable, despite there being a valid sequential backdoor.}
\end{figure}

\subsection{Preliminaries}

We start by introducing the notation and definitions used throughout the paper.
In particular, we use capital letters for random variables ($Z$), and small letters for their values ($z$). Bolded letters represent sets of random variables and their samples ($\bm{Z}= \{Z_1,...,Z_n\}$, $\bm{z} = \{z_1\sim Z_1,...,z_n \sim Z_n\}$). $|\bm{Z}|$ represents a set's cardinality. The joint distribution over variables $\*Z$ is denoted by $P(\*Z)$. To simplify notation, we consistently use the shorthand $P(z_i)$ to represent probabilities $P(Z_i=z_i)$.

The basic semantic framework of our analysis rests on \textit{structural causal models} (SCMs) \cite[Ch.~7]{pearlCausalityModelsReasoning2000}. An SCM $M$ is a tuple $\langle \*U, \*V, \*F, P(\*u) \rangle$ with $\*V$ the set of endogenous, and $\*U$ exogenous variables. $\*F$ is a set of structural functions s.t. for $f_V \in \*F$, $V \leftarrow f_{V}(\pa_V, u_V)$, with $\PA_V \subseteq \*V, U_V \subseteq \*U$ . Values of $\*U$ are drawn from an exogenous distribution $P(\*u)$, inducing distribution $P(\*V)$ over the endogenous $\*V$. Since the learner can observe only a subset of endogenous variables, we split $\*V$ into $\*O \subseteq \*V$ (observed) and $\*L = \*V \setminus \*O$ (latent) sets of variables. The marginal $P(\*O)$ is thus referred to as the \emph{observational distribution}. 

% TODO: add a 1-2 sentence version of this at start of sec 2 right before the P(v|do(pi)) equation - no need for the general do, can keep it focused towards pi.
% An intervention on a subset $\*X \subseteq \*V$, denoted by $\doo(\*x)$, is an operation where values of $\*X$ are set to constants $\*x$, replacing the functions $\{f_{X}: \forall X \in \*X\}$ that would normally determine their values. For an SCM $M$, let $M_{\*x}$ be a submodel of $M$ induced by intervention $\doo(\*x)$. For a set $\*S \subseteq \*V$, the interventional distribution $P(\*s|\doo(\*x))$ induced by $\doo(\*x)$ is defined as the distribution over $\*S$ in the submodel $M_{\*x}$, i.e., $P(\*s|\doo(\*x))_M \equiv P(\*s)_{M_{\*x}}$. We leave $M$ implicit when it is obvious from the context. For a detailed survey on SCMs, we refer readers to \cite[Ch.~7]{pearlCausalityModelsReasoning2000}.

Each SCM $M$ is associated with a causal diagram $\1G$ where (e.g., see \Cref{fig:nodescX}) solid nodes represent observed variables $\*O$, dashed nodes represent latent variables $\*L$, and arrows represent the arguments $\pa(V)$ of each functional relationship $f_{V}$. Exogenous variables $\*U$ are not explicitly shown; a bi-directed arrow between nodes $V_i$ and $V_j$ indicates the presence of an unobserved confounder (UC) affecting both $V_i$ and $V_j$. We will use standard conventions to represent graphical relationships such as parents, children, descendants, and ancestors. For example, the set of parent nodes of $\*X$ in $\1G$ is denoted by $\pa(\*X)_{\1G} = \cup_{X \in \*X} \pa(X)_{\1G}$. $\ch$, $\de$ and $\an$ are similarly defined. Capitalized versions $\Pa, \Ch, \De, \An$ include the argument as well, e.g. $\De(\*X)_{\1G} = \de(\*X)_{\1G} \cup \*X$. An observed variable $V_i \in \*O$ is an \emph{effective parent} of $V_j \in \*V$ if there is a directed path from $V_i$ to $V_j$ in $\1G$ such that every internal node on the path is in $\*L$. We define $\pa^+(\*S)$ as the set of effective parents of variables in $\*S$, excluding $\*S$ itself, and $\Pa^+(\*S)$ as $\*S \cup \pa^+(\*S)$. Other relations, like $ch^+(\*S)$ are defined similarly.

A path from a node $X$ to a node $Y$ in $\1G$ is said to be ``active" conditioned on a (possibly empty) set $\*W$ if there is a collider at $A$ along the path ($\rightarrow A \leftarrow$) only if $A\in \An(\* W)$, and the path does not otherwise contain vertices from $\* W$ (d-separation, \cite{kollerProbabilisticGraphicalModels2009}). $\*X$ and $\*Y$ are independent conditioned on $\*W$ $(\*X \indep \*Y | \*W)_{\1G}$ if there are no active paths between any $X\in \*X$ and $Y\in \*Y$. For a subset $\*X \subseteq \*V$, the subgraph obtained from $\1G$ with edges outgoing from $\*X$ / incoming into $\*X$ removed is written $\1G_{\underline{\*X}}$/$\1G_{\overline{\*X}}$ respectively. Finally, we utilize a grouping of observed nodes, called \emph{confounded components} (c-components, \cite{tianGeneralIdentificationCondition2002a,tian:02}).
\begin{definition} \label{def:ccomponent}
    For a causal diagram $\1G$, let $\*N$ be a set of unobserved variables in $\*L \cup \*U$. A set $\*C \subseteq \Ch(\*N) \cap \*O$ is a \textbf{c-component} if for any pair $U_i, U_j \in \*N$, there exists a path between $U_i$ and $U_j$ in $\1G$ such that every observed node $V_k \in \*O$ on the path is a collider (i.e., $\rightarrow V_k \leftarrow$).
\end{definition}

C-components correspond to observed variables whose values are affected by related sets of unobserved common causes, such that if $A,B\in \*C$, $(A\not\indep B | \*O\setminus\{A,B\})$.
In particular, we focus on \textit{maximal} c-components $\*C$ , where there doesn't exist c-component $\*C'$ s.t. $\*C \subset \*C'$. The collection of maximal c-components forms a partition $\*C_1, \dots, \*C_m$ over observed variables $\*O$. For any set $S \subseteq \*O$, let $\*C(S)$ be the union of c-components $\*C_i$ that contain variables in $S$. For instance, for variable $Z$ in \Cref{fig:nosequential}, the c-component $\*C(\{Z\}) = \{Z, X_1\}$.

%Following the literature, we denote $pa$, $ch$, $de$, $an$ as parents, children, descendants, and ancestors of nodes in $\mathcal{G}$. When used with a set, the result is the union of all elements ($pa(\bm{Z}) = \cup_{i}pa(Z_i)$). We also use capitalized versions $Pa,Ch,De,An$ to represent inclusion of argument ($Pa(\bm{Z})=pa(\bm{Z})\cup \bm{Z}$). Since our graphs include latent variables as possible parents, we define $pav(V_i)$ as the set of observed variables with directed paths to $V_i$ made up entirely of latent variables (i.e. parents of $V_i$ in projection of $\mathcal{G}$ to observables. $chv$ is defined similarly). 

%\begin{definition} \label{def:ccomponent}
%    For a causal diagram $\1G$, a set $\*C \subseteq \*O$ is a \emph{c-component} if for any pair $V_i, V_j \in \*C$, there exists a path $V_i \leftarrow ... \rightarrow V_j$ with all colliders ($\rightarrow V_k \leftarrow $) observed, and all non-colliders latent (including the path of length 0, $V_i \in \bm{C}$).
%\end{definition}

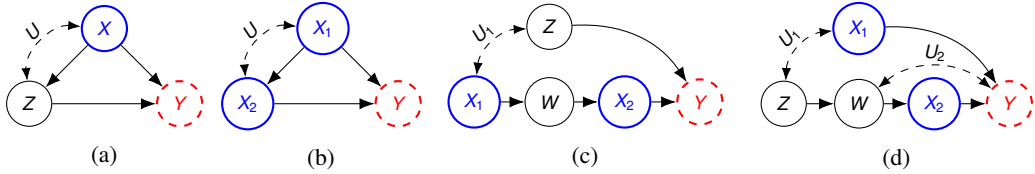
\begin{figure}
    \begin{subfigure}{0.20\linewidth}
        \centering
        \begin{tikzpicture}
            \node[vertex] (x2) at (0,0) {$Z$};
            \node[vertex,dashed,red,thick] (y) at (2,0) {$Y$};

            \node[vertex,blue,thick] (x1) at (1,1) {$X$};

            \path[bidirected] (x1) edge[bend right=50] node[el,above] {$U$} (x2);
            \path (x1) edge (x2);

            \path (x1) edge (y);
            \path (x2) edge (y);
        \end{tikzpicture}
        \caption{\label{fig:futurecond}}
    \end{subfigure}
    \begin{subfigure}{0.20\linewidth}
        \centering
        \begin{tikzpicture}
            \node[vertex,blue,thick] (x2) at (0,0) {$X_2$};
            \node[vertex,dashed,red,thick] (y) at (2,0) {$Y$};

            \node[vertex,blue,thick] (x1) at (1,1) {$X_1$};

            \path[bidirected] (x1) edge[bend right=50]  node[el,above] {$U$} (x2);
            \path (x1) edge (x2);

            \path (x1) edge (y);
            \path (x2) edge (y);
        \end{tikzpicture}
        \caption{\label{fig:futurecondX}}
    \end{subfigure}
    \begin{subfigure}{0.29\linewidth}
        \centering
        \begin{tikzpicture}
            \node[vertex,blue,thick] (x1) at (0,0) {$X_1$};
            \node[vertex] (w) at (1,0) {$W$};
            \node[vertex,blue,thick] (x2) at (2,0) {$X_2$};

            \node[vertex,dashed,red,thick] (y) at (3,0) {$Y$};

            \node[vertex] (z) at (1,1) {$Z$};

            \path (w) edge (x2);

            \path (x1) edge (w);
            \path (x2) edge (y);
            \path (z) edge[bend left=30] (y);

            \path[bidirected] (x1) edge[bend left=40] node[el,above] {$U_1$} (z);
        \end{tikzpicture}
        \caption{\label{fig:relevantnode}}
    \end{subfigure}
    \begin{subfigure}{0.29\linewidth}
        \centering
        \begin{tikzpicture}
            \node[vertex] (x1) at (0,0) {$Z$};
            \node[vertex] (w) at (1,0) {$W$};
            \node[vertex,blue,thick] (x2) at (2,0) {$X_2$};

            \node[vertex,dashed,red,thick] (y) at (3,0) {$Y$};

            \node[vertex,blue,thick] (z) at (1,1) {$X_1$};

            \path (w) edge (x2);

            \path (x1) edge (w);
            \path (x2) edge (y);
            \path (z) edge[bend left=30] (y);

            \path[bidirected] (w) edge[bend left=40] node[el,above] {$U_2$} (y);
            \path[bidirected] (x1) edge[bend left=40] node[el,above] {$U_1$} (z);
        \end{tikzpicture}
        \caption{\label{fig:nodescX}}
    \end{subfigure}
    \caption{Despite there being no latent path between $Y$ and any $X$, the query in (\subref{fig:futurecond}) is not imitable, but the query in (\subref{fig:futurecondX}) is imitable. While (\subref{fig:relevantnode}) is imitable if $Z$ comes before $X_2$ in temporal order, the query in (\subref{fig:nodescX}) is imitable only if $Z$ comes before $X_1$.}
\end{figure}

\section{Causal Sequential Imitation Learning} \label{sec:seqimitation}
We are interested in learning a policy over a series of actions $\*X \subseteq \*O$ so that an imitator gets average reward $Y \in \*V$ identical to that of an expert demonstrator. More specifically, let variables in $\*X$ be ordered by $X_1, \dots, X_n$, $n = |\*X|$. Actions are taken sequentially by the imitator, where only information available at the time of the action can be used to inform a policy for $X_i\in \*X$. To encode the ordering of observations and actions in time, we fix a topological ordering on the variables of $\1G$, which we call the ``temporal ordering''. We define functions $\before(X_i)$ and $\after(X_i)$ to represent nodes that come before/after an action $X_i \in \*X$ following the ordering, excluding $X_i$ itself. A policy $\pi$ on actions $\*X$ is a sequence of decision rules $\{\pi_1, \dots, \pi_n\}$ where each $\pi_i(X_i | \*Z_i)$ is a function mapping from domains of covariates $\*Z_i \subseteq \before(X_i)$ to the domain of action $X_i$. The imitator following a policy $\pi$ replacing the demonstrator in an environment is encoded by replacing the expert's original policy in the SCM $M$ with $\pi$, which gives the results of the imitator's actions as $P(\*V|\doo(\pi))$. Our goal is to learn an imitating policy $\pi$ such that the induced distribution $P(Y|\doo(\pi))$ perfectly matches the original expert's performance $P(Y)$. Formally

\begin{comment}

An intervention following a policy $\pi$, denoted by $\doo(\pi)$, is an operation that draws values of $\*X$ independently following $\pi$, regardless of its original (natural) function $f_X$. Let $M_{\pi}$ denote the manipulated SCM of $M$ induced by $\doo(\pi)$. Similar to atomic settings, the interventional distribution $P(\*v|\doo(\pi))$ is defined as the distribution over $\*V$ in the manipulated model $M_{\pi}$, given by
\begin{align}
  P(\*v|\doo(\pi)) = \sum_{\*u} P(\*u) \prod_{ V \in \*V \setminus \*X} P(v|\pa_V, u_V) \prod_{i =1, \dots, n} \pi_i(x_i |\*z_i). \label{eq:do}
\end{align}
\end{comment}

\begin{definition}
    \label{def:imitable}
    \cite{zhangCausalImitationLearning2020} Given a causal diagram $\1G$, $\bm{Y}\subseteq \bm{V}$ is said to be imitable with respect to actions $\bm{X} \subseteq \bm{O}$ in $\1G$ if there exists $\pi \in \Pi$ uniquely discernible from the observational distribution $P(\*O)$ such that for all possible SCMs $M$ compatible with $\mathcal{G}$, $P(\bm{Y})_M=P(\bm{Y}|do(\pi))_M$.
\end{definition}
In other words, the expert's performance on reward $Y$ is imitable if any set of SCMs must share the same imitating policy $\pi \in \Pi$ whenever they generate the same causal diagram $\mathcal{G}$ and the observational distribution $P(\*O)$. Henceforth, we will consistently refer to \Cref{def:imitable} as the \emph{fundamental problem of causal imitation learning}.
For single stage decision-making problems ($\*X = \{X\}$), \cite{zhangCausalImitationLearning2020} demonstrated imitability for reward $Y$ if and only if there exists a set $\*Z \subseteq \before(X)$ such that 
$\left(Y \indep X | \*Z\right)_{\1G_{\underline{X}}}$, called the backdoor admissible set, \cite[Def.~3.3.1]{pearlCausalityModelsReasoning2000} ($\*Z=\{F,B,S\}$ in \Cref{fig:audiocar}).
It is verifiable that an imitating policy is given by $\pi(X|F, B, S) = P(X|F, B, S)$.

Since the backdoor criterion is complete for the single-stage problem, one may be tempted to surmise that a version of the criterion generalized to multiple interventions might likewise solve the imitability problem in the general case ($|\*X|>1$). Next we show that this is not the case. Let $\*X_{1:i}$ stand for a sequence of variables $\{X_1, \dots, X_i\}$; $\*X_{1:i} = \emptyset$ if $i < 1$. \cite{pearlProbabilisticEvaluationSequential1995} generalized the backdoor criterion to the sequential decision-making setting as follows:
\begin{definition}
    \label{def:sequentialbackdoor} \cite{pearlProbabilisticEvaluationSequential1995}
    Given a causal diagram $\mathcal{G}$, a set of action variables $\bm{X}$, and target node $Y$, sets $\bm{Z}_1 \subseteq \before(X_1), \dots ,\bm{Z}_n \subseteq\before(X_n)$ satisfy the sequential backdoor for $(\1G, \*X, Y)$ if for each $X_i \in \*X$ such that $(Y\indep X_i | \bm{X}_{1:i-1},\*Z_{1:i})_{\mathcal{G}_{\underline{X}_i\overline{\bm{X}}_{i+1:n}}}$.
\end{definition}
%where . 
While the sequential backdoor is an extension of the backdoor to multi-stage decisions, its existence does not always guarantee the imitability of latent reward $Y$. As an example, consider the causal diagram $\mathcal{G}$ described in \Cref{fig:nosequential}. In this case, $\*Z_1 = \{\}, \*Z_2 = \{Z\}$, $\{(X_1,\*Z_1),(X_2,\*Z_2)\}$ is a sequential backdoor set for $(\1G, \{X_1, X_2\}, Y)$, but there are distributions for which no agent can imitate the demonstrator's performance ($Y$) without knowledge of either the latent $U_1$ or $U_2$. To witness, suppose that the adversary sets up an SCM with binary variables as follows: $U_1,U_2 \sim Bern(0.5)$, with $X_1 := U_1$, $Z:= U_1 \oplus U_2$, $X_2 := Z$ and $Y=\lnot (X_1\oplus X_2 \oplus U_2)$, with $\oplus$ as a binary XOR. The fact that $U\oplus U=0$ is exploited to generate a chain where each latent variable appears exactly twice in $Y$, making $Y=\lnot(U_1\oplus (U_1\oplus U_2) \oplus U_2)=1$. On the other hand, when imitating, $X_1$ can no longer base its value on $U_1$, making the imitated $\hat Y=\lnot(\hat X_1\oplus \hat X_2 \oplus U_2)$. The imitator can do no better than $\E[\hat Y]=0.5$! We refer readers to \cite[Proposition C.1]{appendix} for a more detailed explanation.

\subsection{Sequential Backdoor for Causal Imitation}
%Since the sequential backdoor (\Cref{def:sequentialbackdoor}) incorrectly classifies non-imitable cases, we are immediately led to the question: can we create similar criterion which is sufficient? Our method uses concepts familiar from the backdoor and sequential backdoor criteria, while capturing the additional requirements for imitability missing from previous approaches.

We now introduce the main result of this paper: a generalized backdoor criterion that allows one to learn imitating policies in the sequential setting. For a sequence of covariate sets $\bm{Z}_1 \subseteq \before(X_1), \dots ,\bm{Z}_n \subseteq\before(X_n)$, let $\mathcal{G}'_i$, $i = 1, \dots, n$, be the manipulated graph obtained from a causal diagram $\1G$ by first (1) removing all arrows coming into nodes in $X_{i+1:n}$; and (2) adding arrows $\bm{Z}_{i+1}\rightarrow X_{i+1},\dots ,\bm{Z}_{n}\rightarrow X_{n}$. We can then define a sequential backdoor criterion for causal imitation as follows:
\begin{restatable}{definition}{seqpibackdoor}
\label{def:seq_pi_backdoor}
Given a causal diagram $\mathcal{G}$, a set of action variables $\bm{X}$, and target node $Y$, sets $\bm{Z}_1 \subseteq \before(X_1), \dots ,\bm{Z}_n \subseteq\before(X_n)$ satisfy the \textbf{``sequential $\pi$-backdoor"}\footnote{The $\pi$ in ``$\pi$-backdoor'' is part of the name, and does not refer to any specific policy.} for $(\mathcal{G},\bm{X},Y)$ if at each $X_i\in \bm{X}$, either (1) $(X_i\indep Y | \bm{Z}_i)$ in $(\mathcal{G}'_i)_{\underline{X_i}}$, or 
(2) $X_i \notin An(Y)$ in $\mathcal{G}'_i$. 
%(i.e. original parents of future actions are replaced with conditioning set used for imitation).
\end{restatable}
The first condition of \Cref{def:seq_pi_backdoor} is similar to the backdoor criterion where $\bm{Z}_i$ is a set of variables that effectively encodes all information relevant to imitating $X_i$ with respect to $Y$. In other words, if the joint $P(\bm{Z}_i\cup \{X_i\})$ matches when both expert and imitator are acting, then an adversarial $Y$ cannot distinguish between the two. The critical modification of the original $\pi$-backdoor for the sequential setting comes from the causal graph in which this check happens. $\mathcal{G}'_i$ can be seen as $\mathcal{G}$ with all future actions of the imitator already encoded in the graph. That is, when performing a check for $X_i$, it is done with all actions after $i$ being performed by the imitator rather than expert, with the associated parents of each future $X_{j>i}$ replaced with their corresponding imitator's conditioning set. Several examples of $\mathcal{G}'_i$ are shown in \Cref{fig:gprimes}.

The second condition allows for the case where an action at $X_i$ has no effect on the value of $Y$ once future actions are taken. Since $\mathcal{G}'_i$ has modified parents for future $\bm{X}_{j>i}$, the value of $X_i$ might no longer be relevant at all to $Y$, i.e. $Y$ would get the same input distribution no matter what policy is chosen for $X_i$. This allows $X_i$ to fail condition (1), meaning that it is not imitable by itself, but still be part of an imitable set $\bm{X}$, because future actions can shield $Y$ from errors made at $X_i$.

The distinction between condition 1 and condition 2 is shown in \Cref{fig:relevantnode_gx}: in the original graph $\1G$ described in \Cref{fig:relevantnode}, if $Z$ comes after $X_1$, then there is no valid adjustment set that can d-separate $X_1$ from $Y$. However, if the imitating policy for $X_2$ uses $Z$ instead of $W$ or $X_1$ (i.e. $\pi_{X_2}=P(X_2|Z)$), $X_1$ will no longer be an ancestor of $Y$ in $\mathcal{G}'_1$. In effect, the action made at $X_2$ ignores the inevitable mistakes made at $X_1$ due to not having access to confounder $U_1$ when taking the action.

Indeed, the sequential $\pi$-backdoor criterion can be seen as a recursively applying the single-action $\pi$-backdoor. Starting from the last action $X_k$ in temporal order, one can directly show that $Y$ is imitable using a backdoor admissible set $\bm{Z}_k$ (or $X_k$ doesn't affect $Y$ by any causal path). Replacing $X_k$ in the SCM with this new imitating policy, the resulting SCM with graph $G'_{k-1}$ has an identical distribution over $Y$ as $\mathcal{G}$. The procedure can then be repeated for $X_{k-1}$ using $G'_{k-1}$ as the starting graph, and continued recursively, showing imitability for the full set:
\begin{restatable}{theorem}{piseqbdsufficiency}
    \label{thm:pi_seq_bd_sufficiency}
    Given a causal diagram $\mathcal{G}$, a set of action variables $\bm{X}$, and target node $Y$, if there exist sets $\bm{Z}_1,\bm{Z}_2,...,\bm{Z}_k$ that satisfy the sequential $\pi$-backdoor criterion with respect to  $(\mathcal{G},\bm{X},Y)$, then $Y$ is imitable with respect to $\bm{X}$ in $\mathcal{G}$ with policy $\pi(X_i|\bm{Z_i})=P(X_i|\bm{Z}_i)$ for each $X_i\in \bm{X}$.
\end{restatable}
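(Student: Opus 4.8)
The plan is to prove imitability by induction over the $n = |\bm{X}|$ actions taken in \emph{reverse} temporal order, peeling off one action at a time and invoking the single-action backdoor result of \cite{zhangCausalImitationLearning2020} (or a trivial non-ancestor argument) at each step. Fix an arbitrary SCM $M_0 := M$ compatible with $\mathcal{G}$ whose observational distribution is $P(\*O)$, and set $\pi = \{\pi_1,\dots,\pi_n\}$ with $\pi_i(X_i\mid\bm{Z}_i) = P(X_i\mid\bm{Z}_i)$. Since $\{X_i\}\cup\bm{Z}_i \subseteq \*O$, each $\pi_i$ is uniquely discernible from $P(\*O)$, so by \Cref{def:imitable} it suffices to establish $P(Y)_{M} = P(Y\mid\doo(\pi))_{M}$ for every such $M$. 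I would introduce a chain of hybrid regimes $M_0,M_1,\dots,M_n$, where $M_j$ is obtained from $M_0$ by applying the imitator's policies to the last $j$ actions, i.e.\ intervening with $\doo(\pi_{n-j+1},\dots,\pi_n)$, so that $M_n$ is exactly $\doo(\pi)$. The goal then reduces to the telescoping claim $P(Y)_{M_{j-1}} = P(Y)_{M_j}$ for each $j=1,\dots,n$.

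For the inductive step at $j$, write $i = n-j+1$ for the action being converted. The two facts I would establish first are structural. (a) The causal diagram of the hybrid model $M_{j-1}$ is exactly $\mathcal{G}'_i$: replacing each $X_\ell$, $\ell>i$, by the conditional mechanism $\pi_\ell(X_\ell\mid\bm{Z}_\ell)$ deletes every (directed and bidirected) arrow into $X_\ell$ and installs the arrows $\bm{Z}_\ell\to X_\ell$, which is precisely the edge surgery defining $\mathcal{G}'_i$ in \Cref{def:seq_pi_backdoor}. (b) Since the temporal ordering is topological and $\bm{Z}_i\subseteq\before(X_i)$, the entire set $\{X_i\}\cup\bm{Z}_i$ is topologically prior to the intervened actions $X_{i+1},\dots,X_n$, hence contains no descendant of them; a policy intervention leaves the joint law of non-descendants unchanged, so $P_{M_{j-1}}(X_i,\bm{Z}_i) = P(X_i,\bm{Z}_i)$ and therefore $P_{M_{j-1}}(X_i\mid\bm{Z}_i) = P(X_i\mid\bm{Z}_i) = \pi_i(X_i\mid\bm{Z}_i)$.

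With (a) and (b) in hand, the step splits according to the two cases of \Cref{def:seq_pi_backdoor}. Under condition (2), $X_i\notin\An(Y)$ in $\mathcal{G}'_i$, so $X_i$ has no directed path to $Y$ in $M_{j-1}$ and re-randomizing its mechanism cannot change the law of $Y$; thus $P(Y)_{M_j}=P(Y)_{M_{j-1}}$. Under condition (1), $(X_i\indep Y\mid\bm{Z}_i)$ in $(\mathcal{G}'_i)_{\underline{X_i}}$ says exactly that $\bm{Z}_i$ is backdoor-admissible for $X_i\to Y$ in the diagram of $M_{j-1}$, so the single-action analysis of \cite{zhangCausalImitationLearning2020} applies inside $M_{j-1}$: the backdoor adjustment under a conditional policy yields $P(Y\mid\doo(\pi_i))_{M_{j-1}} = \sum_{x_i,\bm{z}_i}\pi_i(x_i\mid\bm{z}_i)\,P_{M_{j-1}}(\bm{z}_i)\,P_{M_{j-1}}(y\mid x_i,\bm{z}_i)$, and substituting $\pi_i = P_{M_{j-1}}(X_i\mid\bm{Z}_i)$ from (b) collapses the sum to $P(Y)_{M_{j-1}}$. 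Either way the telescoping claim holds, and chaining over $j$ gives $P(Y)_M = P(Y\mid\doo(\pi))_M$; as $M$ was arbitrary among models compatible with $(\mathcal{G},P(\*O))$ and $\pi$ is discernible from $P(\*O)$, imitability follows.

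The main obstacle I anticipate is the structural claim (a): one must verify carefully that a \emph{conditional} policy intervention on a downstream action really produces the edge surgery defining $\mathcal{G}'_i$ --- in particular that unobserved-confounder (bidirected) arcs incident to the converted actions are severed --- so that the independence and ancestrality relations read off $\mathcal{G}'_i$ are valid for $M_{j-1}$ despite latent confounding, and so that the backdoor adjustment identity remains sound in the non-Markovian hybrid model. The topological-priority argument in (b) is the other load-bearing point, since it is what allows the single fixed policy $\pi_i = P(X_i\mid\bm{Z}_i)$, computed once in the original $P(\*O)$, to serve correctly as the matching conditional $P_{M_{j-1}}(X_i\mid\bm{Z}_i)$ at every stage of the recursion.
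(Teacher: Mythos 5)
Your proposal is correct and follows essentially the same route as the paper's proof: a reverse-temporal-order induction in which each action is replaced in turn by the policy $P(X_i\mid\bm{Z}_i)$, the resulting hybrid SCM is observed to have diagram $\mathcal{G}'_i$, and each step is discharged either by the single-action backdoor argument of \cite{zhangCausalImitationLearning2020} (condition 1) or by the non-ancestrality of $X_i$ with respect to $Y$ (condition 2). The two load-bearing points you flag --- that the conditional policy surgery severs bidirected arcs so the hybrid diagram really is $\mathcal{G}'_i$, and that $\{X_i\}\cup\bm{Z}_i$ are non-descendants of the already-converted actions so $P_{M_{j-1}}(X_i\mid\bm{Z}_i)=P(X_i\mid\bm{Z}_i)$ --- are exactly the facts the paper's recursion relies on, and both hold as you argue.
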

\Cref{thm:pi_seq_bd_sufficiency} establishes the sufficiency of the sequential $\pi$-backdoor for imitation learning. Consider again the diagram in \Cref{fig:relevantnode}. It is verifiable that covariate sets $\*Z_1 = \{ \}, \*Z_2 = \{Z\}$ are sequential $\pi$-backdoor admissible. \Cref{thm:pi_seq_bd_sufficiency} implies that the imitating policy is given by $\pi_1(X_1) = P(X_1)$ and $\pi_2(X_2|Z) = P(X_2|Z)$. Once $\pi$-backdoor admissible sets are obtained, the imitating policy can be learned from the observational data through standard density estimation methods for stochastic policies, and supervised learning methods for deterministic policies. This means that the sequential $\pi$-backdoor is a method for choosing a set of covariates to use when performing imitation learning, which can be used instead of $Pa(X_i)$ for each $X_i\in \*X$ in the case when the imitator does not observe certain elements of $Pa(\*X)$. With the covariates chosen using the sequential $\pi$-backdoor, one can use domain-specific algorithms for computing an imitating policy based on the observational data.

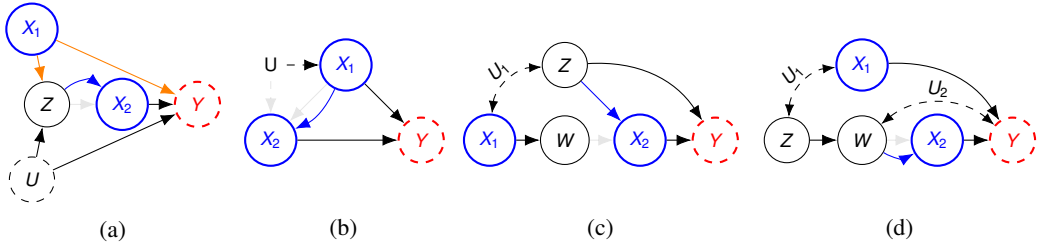
\begin{figure}
\centering
\null
\begin{subfigure}{0.22\linewidth}
        \centering
        \begin{tikzpicture}
            \node[vertex,blue,thick] (x1) at (-1.2,1) {$X_1$};
            \node[vertex,blue,thick] (x2) at (0,0) {$X_2$};
            \node[vertex] (z) at (-1,0) {$Z$};
            \node[vertex,dashed] (u) at (-1.2,-1) {$U$};
            \node[vertex,dashed,red,thick] (y) at (1,0) {$Y$};

            \path[orange] (x1) edge (y);
            \path (x2) edge (y);
            \path[blue] (z) edge[bend left=40] (x2);
            \path[opacity=0.1] (z) edge (x2);
            \path (u) edge (z);
            \path (u) edge (y);
            \path[orange] (x1) edge (z);
            %\path[bidirected] (z) edge[bend right=45] node[l, below] {$U_1$} (y);
        \end{tikzpicture}
        \caption{\label{fig:sequential_gx}}
    \end{subfigure}
\hfill
    \begin{subfigure}{0.19\linewidth}
        \centering
        \begin{tikzpicture}
            \node[vertex,blue,thick] (x2) at (0,0) {$X_2$};
            \node[vertex,dashed,red,thick] (y) at (2,0) {$Y$};
            \node (u) at (0,1) {U};
            \node[vertex,dashed,opacity=0] (uu) at (1, 1.5) {$U$};
            \node[vertex,dashed,opacity=0] (uu) at (1, -0.5) {$U$};

            \node[vertex,blue,thick] (x1) at (1,1) {$X_1$};
            
            \path[dashed] (u) edge (x1);
            \path[dashed,opacity=0.1] (u) edge (x2);
            \path[opacity=0.1] (x1) edge (x2);
            \path[blue] (x1) edge[bend left=20] (x2);

            \path (x1) edge (y);
            \path (x2) edge (y);
        \end{tikzpicture}
        \caption{\label{fig:futurecondX_gx}}
    \end{subfigure}
    \hfill
    \begin{subfigure}{0.27\linewidth}
        \centering
        \begin{tikzpicture}
            \node[vertex,blue,thick] (x1) at (0,0) {$X_1$};
            \node[vertex] (w) at (1,0) {$W$};
            \node[vertex,blue,thick] (x2) at (2,0) {$X_2$};
            \node[vertex,dashed,opacity=0] (uu) at (1, 1.5) {$U$};
            \node[vertex,dashed,opacity=0] (uu) at (1, -0.5) {$U$};

            \node[vertex,dashed,red,thick] (y) at (3,0) {$Y$};

            \node[vertex] (z) at (1,1) {$Z$};

            \path[opacity=0.1] (w) edge (x2);
            \path[blue] (z) edge (x2);

            \path (x1) edge (w);
            \path (x2) edge (y);
            \path (z) edge[bend left=30] (y);

            \path[bidirected] (x1) edge[bend left=40] node[el,above] {$U_1$} (z);
        \end{tikzpicture}
        \caption{\label{fig:relevantnode_gx}}
    \end{subfigure}
    \hfill
    \begin{subfigure}{0.27\linewidth}
        \centering
        \begin{tikzpicture}
            \node[vertex] (x1) at (0,0) {$Z$};
            \node[vertex] (w) at (1,0) {$W$};
            \node[vertex,blue,thick] (x2) at (2,0) {$X_2$};
            \node[vertex,dashed,opacity=0] (uu) at (1, 1.5) {$U$};
            \node[vertex,dashed,opacity=0] (uu) at (1, -0.5) {$U$};

            \node[vertex,dashed,red,thick] (y) at (3,0) {$Y$};

            \node[vertex,blue,thick] (z) at (1,1) {$X_1$};

            \path[opacity=0.1] (w) edge (x2);
            \path[blue] (w) edge[bend right=30] (x2);

            \path (x1) edge (w);
            \path (x2) edge (y);
            \path (z) edge[bend left=30] (y);

            \path[bidirected] (w) edge[bend left=40] node[el,above] {$U_2$} (y);
            \path[bidirected] (x1) edge[bend left=40] node[el,above] {$U_1$} (z);
        \end{tikzpicture}
        \caption{\label{fig:nodescX_gx}}
    \end{subfigure}
    \null
    \caption{\label{fig:gprimes}Examples of $\mathcal{G}'_1$. In \Cref{fig:sequential}, we can have $\bm{Z}_1=\emptyset,\bm{Z}_2=\{Z\}$, so $X_2$ has its parents cut, and a new arrow added from $Z$ to $X_2$ ({\color{blue}blue}). The independence check $(X_1\indep Y|\emptyset)$ is done in graph (\subref{fig:sequential_gx}) with edges outgoing from $X_1$ removed ({\color{orange}orange}). In \Cref{fig:futurecondX}, using $\bm{Z}_1=\emptyset,\bm{Z}_2=\{X_1\}$, we first replace the parents of $X_2$ with just $X_1$ (\subref{fig:futurecondX_gx}), and then remove both resulting outgoing edges from $X_1$ to check if $(X_1\indep Y)$. On the other hand, in \Cref{fig:relevantnode}, if $\bm{Z}_2=\{Z\}$, we get (\subref{fig:relevantnode_gx}), which means $X_i\notin An(Y)$, passing condition 2 of \Cref{def:seq_pi_backdoor}. Finally, in \Cref{fig:nodescX}, with $\bm{Z}_2=\{W\}$, $X_1$ must condition on either $Z$ or $W$ to be independent of $Y$ in (\subref{fig:nodescX_gx}) once the edge $X_1\rightarrow Y$ is removed.
    }
\end{figure}

\section{Finding Sequential $\pi$-Backdoor Admissible Sets}
At each $X_i$, the sequential $\pi$-backdoor criterion requires that $\*Z_{i}$ is a back-door adjustment set in the manipulated graph $\mathcal{G}'_i$. There already exist efficient methods for finding adjustment sets in the literature \cite{tianFindingMinimalDseparators1998,van2020finding},
so if the adjustment were with reference to $\mathcal{G}$, one could run these algorithms on each $X_i$ independently to find each backdoor admissible set $\*Z_i$. However, each action $X_i$ has its $\*Z_i$ in the manipulated graph $\mathcal{G}'_i$, which is \textit{dependent on the adjustment used for future actions $X_{i+1:n}$}. This means that certain adjustment sets $\*Z_j$ for $X_{j>i}$ will make there not exist any $\*Z_i$ for $X_i$ in $\mathcal{G}'_i$ that satisfies the criterion! As an example, in \cref{fig:relevantnode}, $X_2$ can use any combination of $Z,X_1,W$ as a valid adjustment set $\*Z_2$. However, if $Z$ comes after $X_1$ in temporal order, only $\*Z_2=\{Z\}$ leads to valid imitation over $\*X = \{X_1,X_2\}$.

The direct approach towards solving this problem would involve enumerating all possible backdoor admissible sets $\*Z_i$ for each $X_i$, but there are both exponentially many backdoor admissible sets $\*Z_i$, and exponentially many combinations of sets over multiple $X_{i:n}$. Such a direct exponential enumeration is not feasible in practical settings. To address these issues, this section will see the development of \Cref{alg:seq_pi_backdoor}, which  finds a sequential $\pi$-backdoor admissible set $\*Z_{1:n}$ with regard to actions $\*X$ in a causal diagram $\1G$ in polynomial time, if such a set exists.

Before delving into the details of \Cref{alg:seq_pi_backdoor}, we describe a method intuitively motivated by the ``Nearest Separator" from \cite{van2015efficiently} that can generate a backdoor admissible set $\*Z_i$ for a single independent action $X_i$ in the presence of unobserved variables. While it does not solve the problem of multiple actions due to the issues listed above, it is a building block for \Cref{alg:seq_pi_backdoor}.

Consider the Markov Boundary (minimal Markov Blanket, \cite{pearl1988probabilistic}) for a set of nodes $\bm{O}^X\subseteq \bm{O}$, which is defined as the minimal set $\bm{Z}\subset \bm{O}\setminus \bm{O}^X$ such that $(\bm{O}^X \indep \bm{O}\setminus \bm{O}^X|\bm{Z})$. This definition can be applied to graphs with latent variables, where it can be constructed in terms of c-components:
\begin{lemma}\label{lemma:markovboundary}
    Given $\bm{O}^X\subseteq \bm{O}$, the Markov Boundary of $\bm{O}^X$ in $\bm{G}$ is $\Pa^+(\bm{C}(\Ch^+(\bm{O}^X)))\setminus \bm{O}^X$
\end{lemma}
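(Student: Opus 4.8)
The plan is to show that the proposed set, which I abbreviate $\bm{B}:=\Pa^+(\bm{C}(\Ch^+(\bm{O}^X)))\setminus\bm{O}^X$, is exactly the (unique) minimal set of observed variables that d-separates $\bm{O}^X$ from the remaining observed variables $\bm{R}:=\bm{O}\setminus(\bm{O}^X\cup\bm{B})$, i.e. $(\bm{O}^X\indep\bm{R}\mid\bm{B})_{\bm{G}}$ with no proper subset of $\bm{B}$ achieving this. The conceptual engine is the classical characterization of the Markov blanket as the neighborhood of $\bm{O}^X$ in the moralized ancestral graph, specialized to the latent setting: a node joins the blanket either because it is (effectively) adjacent to $\bm{O}^X$, or because moralization links it to $\bm{O}^X$ through a shared child or a shared unobserved confounder. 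The three layers of the formula are designed to capture precisely these cases: $\Ch^+$ collects the observed (effective) children, $\bm{C}(\cdot)$ adjoins the c-component mates, which are the ``spouses'' created by latent confounders whose shared collider is itself unobserved, and $\Pa^+$ adds the effective parents, covering both the backdoor parents of $\bm{O}^X$ and the genuine co-parents of its children. As a sanity check and skeleton for the general argument, I would first note that in the Markovian case with no latents this collapses to $\ch(\bm{O}^X)\cup\pa(\bm{O}^X)\cup\pa(\ch(\bm{O}^X))\setminus\bm{O}^X$, the textbook Markov blanket.

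For soundness I would prove that no path active given $\bm{B}$ can reach $\bm{R}$ from $\bm{O}^X$. Taking such a hypothetical path and tracing it from an endpoint $a\in\bm{O}^X$, I use that internal latent nodes are never in the conditioning set, so any maximal run of latent non-colliders collapses into a single effective edge between observed nodes. Analyzing the first observed node $w$ reached after leaving $\bm{O}^X$, I argue by cases on the edge orientations at $w$: if the path leaves $a$ along an outgoing effective edge, then $w$ is an effective child and $w\in\Ch^+(\bm{O}^X)\subseteq\bm{B}$; if along an incoming edge, then $w$ is an effective parent and $w\in\Pa^+(\bm{C}(\Ch^+(\bm{O}^X)))\subseteq\bm{B}$; and if $a$ sits at the head of a latent confounding fork, then $w$ shares a c-component with $a$ and $w\in\bm{C}(\Ch^+(\bm{O}^X))\subseteq\bm{B}$. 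In each branch $w\in\bm{B}$, so the path is blocked at its first observed node outside $\bm{O}^X$ (a conditioned non-collider) unless $w$ is a collider; the collider sub-cases are handled by the extra observation that any conditioned collider must be an ancestor of $\bm{B}$, which holds because colliders along confounding paths lie in the same maximal c-component and are therefore themselves in $\bm{B}$.

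For necessity and uniqueness I would show that every valid blanket must contain $\bm{B}$, which simultaneously yields minimality. For each $w\in\bm{B}$ I exhibit a path from some $a\in\bm{O}^X$ to $w$ that remains active under conditioning on $\bm{B}\setminus\{w\}$ and indeed cannot be severed by any observed set avoiding $w$. When $w$ is an effective parent or child, the witness is the directed latent-only path $a\to\cdots\to w$ (or its reverse): every internal node is latent and hence unconditionable, so the only observed vertex that can remove the connection is $w$ itself. When $w$ is a co-parent or a c-component mate, the witness is a path $a\to\cdots\to c$ concatenated with a path between $c$ and $w$ on which $c$ is a collider, where $c\in\Ch^+(\bm{O}^X)\setminus\bm{O}^X\subseteq\bm{B}$ is the shared effective child (or, when $c\in\bm{O}^X$, a pure confounding fork into $w$ suffices); since $c$ lies in every blanket by the previous case, conditioning on it opens the collider and the connection can be cut only by placing $w$ in the blanket. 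Collecting these, any blanket contains $\bm{B}$, and since $\bm{B}$ is a blanket by soundness, it is the unique minimal one, i.e. the Markov boundary.

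The main obstacle I anticipate is the bookkeeping in the confounding / c-component cases. Because \Cref{def:ccomponent} permits confounding paths whose intermediate observed nodes are colliders, a witnessing path may traverse several observed colliders rather than a single latent fork, and I must verify that all such intermediate colliders themselves lie in $\bm{C}(\Ch^+(\bm{O}^X))\subseteq\bm{B}$ (following from maximality of the c-components) so that they are correctly opened rather than blocking; dually, in the soundness direction I must rule out an active path sneaking out of $\bm{O}^X$ through such a chain before hitting a node of $\bm{B}$. Carefully tracking that the $\Ch^+$, $\bm{C}(\cdot)$, and $\Pa^+$ layers neither over- nor under-count when $\bm{O}^X$ is a set rather than a single vertex is the portion that demands the most attention, though it is otherwise routine.
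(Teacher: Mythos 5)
Your overall architecture (soundness: $\bm{B}$ blocks every path from $\bm{O}^X$ to the remaining observables; necessity: every blanket must contain $\bm{B}$, hence $\bm{B}$ is the unique boundary) is the right one, and your necessity direction---inducting outward from effective children/parents and using already-established blanket members as opened colliders to force in c-component mates and their effective parents---is essentially workable. The genuine gap is in the soundness case analysis. You classify the latent segment from $a \in \bm{O}^X$ to the first observed node $w$ as directed-out (effective child), directed-in (effective parent), or a pure latent fork (c-component mate of $a$). This trichotomy is incomplete: a latent segment may contain a \emph{latent collider} that is an ancestor of the conditioning set $\bm{B}$, and such a segment is active even though $w$ is none of the three things you allow. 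Concretely, take $a \to L_1 \leftarrow L_2 \to w$ with $L_1, L_2$ latent, and $L_1 \to c$ for an observed $c \notin \bm{O}^X$. Then $c \in \ch^+(a) \subseteq \bm{B}$, so the latent collider $L_1 \in \An(\bm{B})$ is open, and the path $a \to L_1 \leftarrow L_2 \to w$ is active given $\bm{B}$; yet $w$ is not an effective child of $a$ (the directed path dead-ends at $L_1$), not an effective parent, and not in $a$'s c-component (here $\bm{C}(\{a\})=\{a\}$). The lemma survives only because $w$ lies in the c-component of the effective child $c$---both are children of the connected latent set $\{L_1,L_2\}$---which is exactly what the middle layer $\bm{C}(\cdot)$ of the formula is designed to catch; your argument never derives this, so your first-step claim that ``if the path leaves $a$ along an outgoing edge then $w$ is an effective child'' is false as stated. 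Your remark that conditioned colliders are handled because ``colliders along confounding paths lie in the same maximal c-component'' addresses only \emph{observed} colliders; the problematic ones are latent.

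Closing the gap requires the actual technical core of the proof in the latent setting: (i) whenever a latent collider $L$ on the path satisfies $L \in \An(\bm{B})$, the first observed node on a directed path from $L$ to $\bm{B}$ is an effective child of the observed node from which the path descended into $L$, and the alternating fork--collider chain inside a latent segment then places the segment's far observed endpoint in $\Pa^+(\bm{C}(\Ch^+(\{v\})))$ of the near endpoint $v$; and (ii) an induction along the whole path, using that any internal observed node lying in $\bm{B}$ must be a collider, hence is entered by incoming edges on both sides, hence sits in the c-component layer $\bm{C}(\Ch^+(\bm{O}^X))$ and not merely in the $\Pa^+$ layer---this is what licenses reapplying step (i) so that the next observed node is again trapped in $\bm{B} \cup \bm{O}^X$, and the path can never escape to $\bm{R}$. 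Without (i) and (ii) the soundness half of your proof does not go through, and this is precisely where the three-layer structure $\Pa^+(\bm{C}(\Ch^+(\cdot)))$ earns its keep.
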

%To prove necessity of the sequential $\pi$-backdoor for imitation, we will first describe an algorithm that efficiently finds one if it exists, which will help develop an intuition for the fundamental mechanisms behind a sequential $\pi$-backdoor's existence.

%Several efficient methods for finding back-door conditioning sets have been developed in the past decades \cite{tianFindingMinimalDseparators1998,van2020finding}. These approaches cannot be directly applied to the problem of finding a sequential $\pi$-backdoor, because the validity of a conditioning set for a variable $X_i\in \bm{X}$ is dependent on the sets chosen for all future values $X_{i+1},...,X_k$ through the graph $\mathcal{G}'_i$. In particular, a decision for conditioning set for $X_i$ can affect the required conditioning set of an earlier action, which will need to exist in the graph updated with the current set as parents to $X_i$. Furthermore, the decision of set $\bm{Z}_i$ can determine whether a conditioning set is required at all for an earlier action, with the possibility that the earlier action satisfies condition (2) of \Cref{def:seq_pi_backdoor} with correctly chosen values for $\bm{Z}_i$. 

If there is a set $\bm{Z}\subseteq \before(X_i)$ that satisfies the backdoor criterion for $X_i$ with respect to $Y$, then taking $\mathcal{G}^Y$ as the ancestral graph of $Y$, the Markov Boundary $\bm{Z}'$ of $X_i$ in $\mathcal{G}^Y_{\underline{X_i}}$ has $\bm{Z}' \subseteq \before(X_i)$, and also satisfies the backdoor criterion in $\mathcal{G}$. The Markov Boundary can therefore be used to generate a backdoor adjustment set wherever one exists.
 
A na\"ive algorithm that uses the Markov Boundary of $X_i\in \bm{X}$ in $(\mathcal{G}'_i)^Y_{\underline{X_i}}$ as the corresponding $\bm{Z}_i$, and returns a failure whenever $\bm{Z}_i\notin \before(X_i)$ for the sequential $\pi$-backdoor is equivalent to the existing literature on finding backdoor-admissible sets. It cannot create a valid sequential $\pi$-backdoor for \Cref{fig:relevantnode}, since $X_2$ would have $\bm{Z}_2=\{W\}$, but no adjustment set exists for $X_1$ that d-separates it from $Y$ in the resulting $\mathcal{G}'_1$. We must take into account interactions between actions encoded in $\mathcal{G}'_i$.

We notice that an $X_i$ does not require a valid adjustment set if it is not an ancestor of $Y$ in $\mathcal{G}'_i$ (i.e. $X_i$ does not need to satisfy (1) of \Cref{def:seq_pi_backdoor} if it can satisfy (2)). 
 Furthermore, even if $X_i$ is an ancestor of $Y$ in $\mathcal{G}'_i$, and therefore must satisfy condition (1) of \Cref{def:seq_pi_backdoor}, any elements of its c-component that are not ancestors of $Y$ in $\mathcal{G}'_i$ won't be part of $(\mathcal{G}'_i)^Y$, and therefore don't need to be conditioned.
 
 It is therefore beneficial for an action $X_j$ to have a backdoor adjustment set that maximizes the number of nodes that are not ancestors of $Y$ in $\mathcal{G}'_{j-1}$, so that actions $X_{i<j}$ can satisfy (2) of \Cref{def:seq_pi_backdoor} if possible, and have the smallest possible c-components in $(\1G'_i)^Y$ (increasing likelihood that backdoor set $\*Z_i \subseteq \before(X_i)$ exists if $X_i$ must satisfy condition (1)).

\begin{algorithm}[t]
    \caption{Find largest valid $\bm{O}^X$ in ancestral graph of $Y$ given $\mathcal{G}$, $\bm{X}$ and target $Y$}
    \label{alg:seq_pi_backdoor}
    \begin{algorithmic}[1]        
        \Function{HasValidAdjustment}{$\mathcal{G}$,$\bm{O}^X$,$O_i$,$X_i$}
        \State $\bm{C}\leftarrow $ the c-component of $O_i$ in $\mathcal{G}^Y$
        \State $\mathcal{G}_{\bm{C}} \leftarrow$ the subgraph of $\mathcal{G}^Y$ containing only $\Pa^+(\bm{C})$ and intermediate latent variables
        \State $\bm{O}^{C} \leftarrow \bm{C}\setminus(\bm{O}^X \cup\{O_i\})$ (elements of c-component that might be ancestors of $Y$ in $\mathcal{G}'_i$)
        \State \Return $\left(O_i\indep \bm{O}^{C}| (\bm{O}^{C}\cap\before(X_i))\right)$ in $\mathcal{G}_{\bm{C}}$
        \EndFunction
        \Function{FindOx}{$\mathcal{G}$,$\bm{X}$,$Y$}
        \State $\mathscr{O}^X \leftarrow \text{empty map from elements of $\bm{O}$ to elements of $\bm{X}$}$
        \Do
        \For{$O_i \in \bm{O}$ of $\mathcal{G}^Y$ (ancestral graph of $Y$) in reverse temporal order}
        \If{$|\ch^+(O_i)|>0$ \textbf{and} $\ch^+(O_i)\subseteq keys(\mathscr{O}^X)$}
        \State $X_i \leftarrow$ earliest element of $\mathscr{O}^X[\ch^+(O_i)]$ in temporal order
        \If{\textsc{HasValidAdjustment}($\mathcal{G}$,$keys(\mathscr{O}^X)$,$O_i$,$X_i$)}
        \State $\mathscr{O}^X[O_i]  \leftarrow X_i$
        \EndIf
        \ElsIf{$O_i \in \bm{X}$ \textbf{and} \textsc{HasValidAdjustment}($\mathcal{G}$,$keys(\mathscr{O}^X)$,$O_i$,$O_i$)}
        \State $\mathscr{O}^X[O_i]  \leftarrow O_i$
        \EndIf
        \EndFor
        \DoWhile{$|\mathscr{O}^X|$ changed in most recent pass}
        \State \Return $keys(\mathscr{O}^X)$
        \EndFunction
    \end{algorithmic}
\end{algorithm}

To demonstrate this intuition, we once again look at \Cref{fig:relevantnode}, focusing only on action $X_2$. If we were to use $\{W\}$ as $\*Z_2$, we still have the same set of ancestors of $Y$ in $\1G'_1$. If we switch to $\{X_1\}$, then $W$ would no longer be an ancestor of $Y$ in $\1G'_1$ - meaning that $X_1$ is \textit{better} as a backdoor adjustment set for $X_2$ than $\{W\}$ if we only know that $X_2$ is an action (i.e. $W$ would directly satisfy (2) of \Cref{def:seq_pi_backdoor} if it were the other action). Finally, using $\{Z\}$ as $\*Z_2$ makes both $X_1$ and $W$ no longer ancestors of $Y$ in $\1G'_1$, meaning that it is the \textit{best} option for the adjustment set $\*Z_2$.

\textsc{FindOx} in \Cref{alg:seq_pi_backdoor} employs the above ideas to iteratively grow a set $\bm{O}^X\subseteq \bm{O}$ of ancestors of $\bm{X}$ (and including $\bm{X}$) in $\mathcal{G}^Y$ whose elements (possibly excluding $\bm{X}$) will not be ancestors of $Y$ once the actions in their descendants are taken. That is, an element $O_i\in \*O^X$ where $\ch^+(O_i)\subset \*O^X$ is not present in $(\1G'_i)^Y$ for all actions $X_i$ that come before it in temporal order. Combined with the Markov Boundary, \textsc{FindOx} can be used to generate sequential $\pi$-backdoors.

We exemplify the use of \Cref{alg:seq_pi_backdoor} through \Cref{fig:relevantnode}. $\mathscr{O}^X$ represents a map of observed variables which are not ancestors of $Y$ in $\1G'_{i<j}$ to the earliest action $X_j$ in their descendants. The keys of $\mathscr{O}^X$ will be the set $\*O^X$. Considering the temporal order $\{X_1,Z,W,X_2,Y\}$, the algorithm starts from the last node, $Y$, which has no children and is not an element of $\bm{X}$, so is not added to $\mathscr{O}^X$. It then carries on to $X_2$, which is checked for the existence of a valid backdoor adjustment set. Here, the subgraph of the c-component of $X_2$ and its parents (\textsc{HasValidAdjustment}) is simply \circled{$W$}$\rightarrow${\color{blue}\circled{$X_2$}}, meaning that we can condition on $W$ to make $X_2$ independent of all other observed variables, including $Y$, in $\1G_{\underline{X_2}}$ ($W$ is a Markov Boundary for $X_2$ in $\1G_{\underline{X_2}}$). The algorithm therefore sets $\mathscr{O}^X=\{X_2:X_2\}$, because $X_2$ is an action with a valid adjustment set. Notice that if the algorithm returned at this point with $\*O^X=\{X_2\}$, the Markov Boundary of $\*O^X$ in $\1G_{\underline{X_2}}$ is $W$, and corresponds to a sequential $\pi$-backdoor for the single action $X_2$ (ignoring $X_1$), with policy $\pi(X_2|W)=P(X_2|W)$.

Next, $W$ has $X_2$ as its only child, which itself maps to $X_2$ in $\mathscr{O}^X$. The subgraph of $W$'s c-component and its parents is {\color{blue}\circled{$X_1$}}$\rightarrow$\circled{$W$}, giving $(W\indep \*O | X_1)_{\1G_{\underline{W}}}$, and $\{X_1\}\subseteq \before(X_2)$, allowing us to conclude that there is a backdoor admissible set for $X_2$ where $W$ is no longer an ancestor of $Y$. We set $\mathscr{O}^X=\{X_2:X_2, W:X_2\}$, and indeed with $\*O^X=\{X_2,W\}$, the Markov Boundary of $\*O^X$ in $\1G_{\underline{X_2}}$ is $X_1$, and is once again a valid sequential $\pi$-backdoor for the single action $X_2$ (ignoring $X_1$), with policy $\pi(X_2|X_1)=P(X_2|X_1)$. The $W$ in $\*O^X$ was correctly labeled as not being an ancestor of $Y$ after action $X_2$ is taken.

Since $Z$ doesn't have its children in the keys of $\mathscr{O}^X$, and is not an element of $\bm{X}$, it is skipped, leaving only $X_1$. $X_1$'s children ($W$) are in $\mathscr{O}^X$, we check conditioning using $X_2$ instead of $X_1$ (i.e. we check if $X_1$ can satisfy (2) of \Cref{def:seq_pi_backdoor}, and not be an ancestor of $Y$ in $\1G'_1$). This time, we have {\color{blue}\circled{$X_1$}}$\leftrightarrow$\circled{$Z$} as the c-component subgraph, and $Z$ comes before $X_2$, satisfying the check ($X_1 \indep Z|Z$) in \textsc{HasValidAdjustment}, resulting in $\mathscr{O}^X=\{X_2:X_2, W:X_2, X_1: X_2\}$, and $\*O^X=\{X_2,W, X_1\}$. Indeed, the Markov Boundary of $\*O^X$ in $\1G_{\underline{X_2}}$ is $\{Z\}$, and we can construct a valid sequential $\pi$-backdoor by using $\*Z_1= \{\}$ and $\*Z_2=\{Z\}$, where $X_1$ is no longer an ancestor of $Y$ in $\1G'_1$!
In this case, we call $X_2$ a ``boundary action", because it is an ancestor of $Y$ in $\1G'_2$. On the other hand, $X_1$ is not a boundary action, because it is not an ancestor of $Y$ in $\1G'_1$.
\begin{restatable}{definition}{boundarynodes}
\label{def:boundarynodes}
The set $\bm{X}^B\subseteq \bm{X}$ called the ``boundary actions" for $\bm{O}^X := \textsc{FindOx}(\mathcal{G},\bm{X},Y)$ are all elements $X_i\in \*X \cap \bm{O}^X$ where $\ch^+(X_i)\not\subseteq \bm{O}^X$.
\end{restatable}
\Cref{alg:seq_pi_backdoor} is general: the set $\bm{O}^X$ returned by \textsc{FindOx} can always be used in conjunction with its Markov Boundary to construct a sequential $\pi$-backdoor if one exists:
\begin{restatable}{lemma}{constructmfactor}
    \label{thm:construct_seq_pi_backdoor}
    Let $\bm{O}^{X}:=\textsc{FindOx}(\mathcal{G},\bm{X},Y)$, and $\*X':=\bm{O}^{X}\cap \bm{X}$. Taking $\bm{Z}$ as the Markov Boundary of $\bm{O}^{X}$ in $\mathcal{G}^Y_{\underline{X'}}$ and $\bm{X}^B$ as the boundary actions of $\bm{O}^{X}$, the sets $\bm{Z}_i = (\bm{Z}\cup \bm{X}^B)\cap \before(X'_i)$ for each $X'_i\in \bm{X}'$ are a valid sequential $\pi$-backdoor for $(\mathcal{G},\bm{X}',Y)$.
\end{restatable}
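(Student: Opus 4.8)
The plan is to verify directly that the constructed sets $\bm{Z}_i=(\bm{Z}\cup\bm{X}^B)\cap\before(X'_i)$ satisfy, at every $X'_i\in\bm{X}'$, one of the two clauses of \Cref{def:seq_pi_backdoor}, partitioning $\bm{X}'$ into the boundary actions $\bm{X}^B$ and the rest $\bm{X}'\setminus\bm{X}^B$. I would show the non-boundary actions satisfy clause (2) (they drop out of the ancestry of $Y$ in their manipulated graph) and the boundary actions satisfy clause (1) (the Markov Boundary supplies a backdoor adjustment). The temporal admissibility $\bm{Z}_i\subseteq\before(X'_i)$ is immediate from the intersection with $\before(X'_i)$, so all the work is in the two graphical conditions inside the manipulated graphs $\mathcal{G}'_i$, which are themselves built from these same $\bm{Z}_j$'s.

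The central step is to extract a structural invariant from \textsc{FindOx}, proved by induction following the reverse-temporal order in which the loop settles nodes, so that every effective child is settled before its parents. Writing ``$X_k$ is imitator-controlled in $\mathcal{G}'_j$'' for the situation where $X_k$'s parents have been replaced by $\bm{Z}_k\rightarrow X_k$ (equivalently, $X_k$ comes strictly after $X'_j$ in temporal order), the invariant reads: for each key $O\in\bm{O}^X$ that is not a boundary action, with $\mathscr{O}^X[O]=X_k$ (which one checks is always a boundary action), $O$ is a non-ancestor of $Y$ in $\mathcal{G}'_j$ exactly when $X_k$ is imitator-controlled in $\mathcal{G}'_j$. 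The mechanism is that every directed $O$-to-$Y$ path leaves $O$ through some $c\in\ch^+(O)\subseteq\bm{O}^X$, and such a $c$ stops carrying $O$ to $Y$ once the boundary action $\mathscr{O}^X[c]$ is imitator-controlled --- either because $c$ has then become a non-ancestor (if $c$ is a non-boundary key) or because the edges into $c$, hence the $O$-to-$c$ route, have been severed (if $c=\mathscr{O}^X[c]$ is a boundary action). Since imitator-control is a temporal-suffix property, all children cross their thresholds simultaneously, precisely when the earliest does, which is the value $\mathscr{O}^X[O]=$ ``earliest element of $\mathscr{O}^X[\ch^+(O)]$'' chosen by the algorithm. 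One also checks that the added $\bm{Z}_j\rightarrow X_j$ arrows give new outgoing edges only to nodes of $\bm{Z}\cup\bm{X}^B$, none of which are non-boundary keys, so the invariant's reachability analysis is undisturbed by them.

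Clause (2) for a non-boundary action $X'_i\in\bm{X}'\setminus\bm{X}^B$ then follows at once. Because $\ch^+(X'_i)\subseteq\bm{O}^X$, this action was settled by the first branch of the loop, so $\mathscr{O}^X[X'_i]$ is the earliest of its effective children's boundary actions and lies strictly after $X'_i$ in temporal order; it is therefore imitator-controlled in $\mathcal{G}'_i$, and the invariant gives $X'_i\notin\An(Y)$ in $\mathcal{G}'_i$, which is exactly clause (2).

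The main obstacle is clause (1) for a boundary action $X_b\in\bm{X}^B$: the requirement $(X_b\indep Y\mid\bm{Z}_b)$ in $(\mathcal{G}'_b)_{\underline{X_b}}$, with $\bm{Z}_b=(\bm{Z}\cup\bm{X}^B)\cap\before(X_b)$ assembled from the single, globally computed Markov Boundary $\bm{Z}=\Pa^+(\bm{C}(\Ch^+(\bm{O}^X)))\setminus\bm{O}^X$ of $\bm{O}^X$ in $\mathcal{G}^Y_{\underline{X'}}$ (\Cref{lemma:markovboundary}). The difficulty is that $\bm{Z}$ only certifies the separation $(\bm{O}^X\indep\bm{O}\setminus(\bm{O}^X\cup\bm{Z})\mid\bm{Z})$ in the fixed graph $\mathcal{G}^Y_{\underline{X'}}$, whereas clause (1) must hold in $(\mathcal{G}'_b)_{\underline{X_b}}$, whose edge set differs through the cutting of future actions' parents, the addition of $\bm{Z}_j\rightarrow X_j$ arrows, and the cutting of $X_b$'s outgoing edges only. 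The plan is to reduce any active $X_b$-to-$Y$ path in $(\mathcal{G}'_b)_{\underline{X_b}}$ to a forbidden $\bm{O}^X$-to-outside connection in $\mathcal{G}^Y_{\underline{X'}}$ that $\bm{Z}$ blocks, using the invariant to discard the portions routed through shielded (non-ancestor) nodes and to argue that the manipulations distinguishing the two graphs cannot open a new route. The two places demanding the most care are (i) the added $\bm{Z}_j\rightarrow X_j$ arrows, which I must show open no fresh route into $Y$ --- their sources being either separated by $\bm{Z}$ or conditioned-upon boundary actions; and (ii) the passage from $\bm{Z}$ to its admissible restriction $\bm{Z}_b$: the \textsc{HasValidAdjustment} tests that \textsc{FindOx} passed along $X_b$'s c-component are precisely what certify that the separator needed for $X_b$ already lies inside $\before(X_b)$, while conditioning on the earlier boundary actions $\bm{X}^B\cap\before(X_b)$ closes the paths running through them without creating new colliders directed at $Y$.
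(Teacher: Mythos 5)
Your skeleton is the right one, and it is the natural one for this lemma: split $\bm{X}'$ into the boundary actions $\bm{X}^B$ (which must satisfy clause (1) of \Cref{def:seq_pi_backdoor}) and the rest (which must satisfy clause (2)), note that $\bm{Z}_i\subseteq\before(X'_i)$ holds by construction, and observe that the added arrows $\bm{Z}_j\rightarrow X_j$ emanate only from $\bm{Z}\cup\bm{X}^B$, so they never give a non-boundary key a new outgoing edge. But the decisive step --- clause (1) at a boundary action $X_b$, i.e.\ $(X_b\indep Y\mid\bm{Z}_b)$ in $(\mathcal{G}'_b)_{\underline{X_b}}$ --- is never actually derived. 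Writing that the \textsc{HasValidAdjustment} tests ``are precisely what certify that the separator needed for $X_b$ already lies inside $\before(X_b)$'' restates the goal rather than proving it. What has to be bridged is substantial: (i) those tests are local to the c-component subgraph $\mathcal{G}_{\bm{C}}$ of $\mathcal{G}^Y$ and were executed against \emph{intermediate} key sets of the do-while loop, which can be strictly smaller than the final $\bm{O}^X$, so the recorded independences are not statements about $\bm{O}^X$ itself; (ii) the Markov-boundary property of $\bm{Z}$ (\Cref{lemma:markovboundary}) separates $\bm{O}^X$ only from the other \emph{observed} nodes, and only in the fixed graph $\mathcal{G}^Y_{\underline{X'}}$, whereas clause (1) concerns $Y$ --- which may be latent and hence is not covered by that property at all --- inside a manipulated graph with different edges; (iii) dropping the portion of $\bm{Z}$ that comes after $X_b$ and simultaneously adding conditioning on $\bm{X}^B\cap\before(X_b)$ both change which paths are active, and conditioning on the earlier boundary actions can open collider paths, which you assert away (``without creating new colliders directed at $Y$'') rather than rule out. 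A complete proof must take an arbitrary active path from $X_b$ to $Y$ given $\bm{Z}_b$ in $(\mathcal{G}'_b)_{\underline{X_b}}$ and kill it by a case analysis (segments through keys, segments leaving $X_b$'s c-component, segments inside it, segments ending in latents adjacent to $Y$); that case analysis is exactly what is missing, and it is where the lemma's difficulty lives.

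There is also a concrete flaw in your clause-(2) argument: it leans on the claims that every value $\mathscr{O}^X[O]$ is a boundary action, and that every non-boundary action was settled by the first branch so that its value lies strictly after it in temporal order. The algorithm's fixpoint dynamics do not guarantee either. A node already keyed is re-processed on later passes, and an action $X_a$ can enter via the second branch on an early pass (while some effective child is not yet a key), giving $\mathscr{O}^X[X_a]=X_a$; if that child joins the key set on a later pass and the first-branch re-check fails, the final state has $X_a$ non-boundary with $\mathscr{O}^X[X_a]=X_a$, which is never imitator-controlled in $\mathcal{G}'_a$, and your invariant then yields nothing at $X_a$. The repair is to forget the map entirely and argue from closure of the final set: every key that is not a boundary action has $\ch^+\subseteq\bm{O}^X$; every boundary action reachable from it by effective-children chains inside $\bm{O}^X$ lies strictly later in temporal order, hence is imitator-controlled in the relevant $\mathcal{G}'_i$; then a backwards temporal induction --- using your correct observation that such keys acquire no new outgoing edges, together with the fact that all original edges into future actions are cut --- shows the key is not in $\An(Y)$ in $\mathcal{G}'_i$. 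With that reformulation clause (2) is sound; clause (1) remains the genuine gap.
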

\begin{restatable}{lemma}{biggestmfactor}
    \label{thm:biggestmfactor}
    Let $\bm{O}^{X}:=\textsc{FindOx}(\mathcal{G},\bm{X},Y)$. Suppose that there exists a sequential $\pi$-backdoor for $\bm{X}" \subseteq \bm{X}$. Then $\bm{X}" \subseteq \bm{O}^{X}$.
\end{restatable}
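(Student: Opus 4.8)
The plan is to establish $\bm{X}'' \subseteq \bm{O}^X$ by exhibiting \textsc{FindOx} as the computation of the \emph{unique maximal} set that is self-consistent with respect to the local tests it performs, and then showing that the set of nodes "shielded" by any sequential $\pi$-backdoor for $\bm{X}''$ is itself such a self-consistent set. Concretely, given a sequential $\pi$-backdoor $\{\bm{Z}_i''\}$ for $\bm{X}''$, I would first define its associated set $\bm{O}^{X''}$, consisting of the actions $\bm{X}''$ (restricted to $\An(Y)$; actions not ancestral to $Y$ in $\mathcal{G}$ are trivially imitable and do not interact with the argument) together with every observed node of $\mathcal{G}^Y$ that fails to be an ancestor of $Y$ once all of $\bm{X}''$ has been imitated with parents $\bm{Z}_i''$. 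Since every relevant $X_i\in\bm{X}''$ lies in $\bm{O}^{X''}$ by construction, it suffices to prove the single containment $\bm{O}^{X''} \subseteq \bm{O}^X$.

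Next I would argue the maximality of the \textsc{FindOx} fixed point via monotonicity. Enlarging the set $\bm{O}^X$ passed to \textsc{HasValidAdjustment} only shrinks $\bm{O}^{C}=\bm{C}\setminus(\bm{O}^X\cup\{O_i\})$, which can only make the test $\left(O_i\indep \bm{O}^{C}\mid \bm{O}^{C}\cap\before(X_i)\right)$ in $\mathcal{G}_{\bm{C}}$ easier to pass, and having more effective children already present only enables more additions. Hence the family of sets all of whose members satisfy their respective addition conditions is closed under union, so it has a unique maximal element; a reverse-temporal-order invariant then shows that \textsc{FindOx}'s do-while loop, which re-adds every currently-admissible node on each pass, converges exactly to that maximum $\bm{O}^X$.

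The crux is to verify that $\bm{O}^{X''}$ is self-consistent, i.e.\ that each $O_i\in\bm{O}^{X''}$ would be added by \textsc{FindOx} if the current map $\mathscr{O}^X$ already had key set $\bm{O}^{X''}$. I would do this by reverse-temporal induction. For a non-action node $O_i$, being a non-ancestor of $Y$ after imitation forces every effective child of $O_i$ to also be shielded, hence to lie in $\bm{O}^{X''}$, so the effective-children branch applies with earliest descendant action $X_i$; the required independence then follows by translating the global shielding of $O_i$ into the local c-component test, invoking \Cref{lemma:markovboundary} to localize the back-door structure of $O_i$ to $\Pa^+(\bm{C})$ and to place the witnessing adjustment set inside $\before(X_i)$. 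For an action $X_i\in\bm{X}''$, I would split on \Cref{def:seq_pi_backdoor}: clause (2) makes $X_i$ a non-ancestor and reduces to the previous case, while clause (1) supplies a back-door admissible set in $(\mathcal{G}'_i)_{\underline{X_i}}$, which — again via \Cref{lemma:markovboundary} restricted to the c-component of $X_i$ — yields precisely the conditioning set $\bm{O}^{C}\cap\before(X_i)$ that \textsc{HasValidAdjustment} checks.

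The main obstacle I anticipate is exactly this local-to-global translation together with the bookkeeping of the earliest descendant action. I must show that conditioning on $\bm{O}^{C}\cap\before(X_i)$, with $X_i$ the \emph{earliest} action reached from the already-shielded children, is sufficient whenever the $\pi$-backdoor's set $\bm{Z}_i''$ is; this requires arguing that the earliest-action cutoff is the correct bottleneck — a node remains shielded in $\mathcal{G}'_j$ only for stages $j$ preceding its earliest descendant action, since beyond that point one of its children's shielding paths reopens — and that restricting the separating set to the c-component of $O_i$ in $\mathcal{G}^Y$ loses no separating power. A secondary subtlety is that mapped actions may be reassigned across passes of the do-while loop; I would address this by showing such reassignments only move cutoffs in a way compatible with the monotone fixed-point structure, so that neither convergence nor the maximality argument is disturbed.
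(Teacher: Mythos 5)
Your high-level architecture---reduce the lemma to showing that the "shielded set" of the given sequential $\pi$-backdoor is contained in \textsc{FindOx}'s output, via reverse-temporal induction plus a maximality property of the algorithm's fixed point---is a sensible plan, but the pivot on which it all turns, the monotonicity claim, is false, and even if it were true it points in the wrong direction. Enlarging $\bm{O}^X$ does shrink $\bm{O}^{C}$, but it thereby also shrinks the conditioning set $\bm{O}^{C}\cap\before(X_i)$, and a removed node may have been an essential non-collider blocker. Concretely: take latents $U_1$ and $U_2$ with $U_1$ a parent of $O_i, W, Q$ and $U_2$ a parent of $Q, V$, a directed edge $W\to V$, all four observed nodes ancestors of $Y$, with $W\in\before(X_i)$ and $Q,V\in\after(X_i)$. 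If the current key set contains $Q$ but not $W$, the test for $O_i$ is $(O_i\indep V\mid W)$ in $\mathcal{G}_{\bm{C}}$, which passes: the trek $O_i\leftarrow U_1\to W\to V$ is blocked by conditioning on $W$, and the path $O_i\leftarrow U_1\to Q\leftarrow U_2\to V$ is blocked at the unconditioned collider $Q$. Add $W$ to the key set and the test becomes $(O_i\indep V\mid\emptyset)$, which fails, since $O_i\leftarrow U_1\to W\to V$ is now active. So \textsc{HasValidAdjustment} is not monotone in $\bm{O}^X$, the family of ``self-consistent'' sets is not obviously closed under union, and the ``unique maximal fixed point'' characterization of \textsc{FindOx} does not follow from the argument you give.

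Moreover, even granting your monotonicity, it would only say that tests get easier as the key set grows, which supports a soundness-type invariant (the key set never escapes the maximum); what this lemma needs is the opposite transfer. An element $O_i$ of the shielded set $\bm{O}^{X''}$ passes its test relative to $\bm{O}^{X''}$, and you must conclude that it passes relative to the algorithm's \emph{current} key set $K$, which is in general incomparable to $\bm{O}^{X''}$: $K$ may lack the not-yet-processed elements of $\bm{O}^{X''}\cap\before(O_i)$ (these then appear as extra conditioning nodes, and extra conditioning can activate collider paths), and $K$ may contain greedily added nodes outside $\bm{O}^{X''}$ (whose removal from the conditioning set can unblock treks, exactly as above). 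Showing that neither kind of discrepancy can flip the test---for instance by exploiting the fact that every node the algorithm has already admitted passed its own test, which constrains how it can sit inside $O_i$'s c-component---is the actual content of the lemma, and it is absent from your proposal. The localization to $\mathcal{G}_{\bm{C}}$ via \Cref{lemma:markovboundary} and the earliest-action bookkeeping are fine as far as they go, but without a correct replacement for the monotonicity step the proof does not go through.
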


Combined together, \Cref{thm:biggestmfactor,thm:construct_seq_pi_backdoor} show that \textsc{FindOx} finds the \textit{maximal} subset of $\bm{X}$ where a sequential $\pi$-backdoor exists, and the adjustment sets $\*Z_{1:n}$ can be constructed using the subset of a Markov Boundary over $\bm{O}^{X}$ that comes before each corresponding action $X_i$ (\Cref{thm:construct_seq_pi_backdoor}). \textsc{FindOx} is therefore both necessary and sufficient for generating a sequential $\pi$-backdoor:

\begin{restatable}{theorem}{alghasmfactor}
    \label{thm:alghasmfactor}
    Let $\bm{O}^{X}$ be the output of $\textsc{FindOx}(\mathcal{G},\bm{X},Y)$. A sequential $\pi$-backdoor exists for $(\mathcal{G},\bm{X},Y)$ if and only if $\bm{X}\subseteq\bm{O}^{X}$.
\end{restatable}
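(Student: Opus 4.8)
The goal is to prove \Cref{thm:alghasmfactor}: a sequential $\pi$-backdoor exists for $(\mathcal{G},\bm{X},Y)$ if and only if $\bm{X}\subseteq\bm{O}^{X}$, where $\bm{O}^{X}=\textsc{FindOx}(\mathcal{G},\bm{X},Y)$. The plan is to obtain this as a direct corollary of the two supporting lemmas that precede it, \Cref{thm:construct_seq_pi_backdoor} and \Cref{thm:biggestmfactor}, so the real work is packaging them correctly and handling the edge cases in each direction.

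For the \emph{if} direction, I would assume $\bm{X}\subseteq\bm{O}^{X}$ and apply \Cref{thm:construct_seq_pi_backdoor}. That lemma takes $\bm{X}':=\bm{O}^{X}\cap\bm{X}$ and constructs an explicit sequential $\pi$-backdoor for $(\mathcal{G},\bm{X}',Y)$ using the Markov Boundary $\bm{Z}$ of $\bm{O}^{X}$ in $\mathcal{G}^Y_{\underline{X'}}$ together with the boundary actions $\bm{X}^B$. When $\bm{X}\subseteq\bm{O}^{X}$ we have $\bm{X}'=\bm{O}^{X}\cap\bm{X}=\bm{X}$, so the construction directly yields a sequential $\pi$-backdoor for the full action set $(\mathcal{G},\bm{X},Y)$, establishing existence.

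For the \emph{only if} direction, I would prove the contrapositive: suppose $\bm{X}\not\subseteq\bm{O}^{X}$, so some action $X_j\in\bm{X}$ is missing from $\bm{O}^{X}$, and show no sequential $\pi$-backdoor can exist for $(\mathcal{G},\bm{X},Y)$. This is where \Cref{thm:biggestmfactor} does the heavy lifting: it states that \emph{any} $\bm{X}''\subseteq\bm{X}$ admitting a sequential $\pi$-backdoor must satisfy $\bm{X}''\subseteq\bm{O}^{X}$. Taking the contrapositive, if a sequential $\pi$-backdoor existed for the full set $\bm{X}$ (i.e.\ $\bm{X}''=\bm{X}$ admits one), then $\bm{X}\subseteq\bm{O}^{X}$ would follow, contradicting $X_j\notin\bm{O}^{X}$. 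Hence no such backdoor exists.

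The two directions together give the biconditional, so at the level of \Cref{thm:alghasmfactor} itself the argument is essentially a two-line logical combination. The genuinely hard parts live in the two lemmas it invokes, and are assumed here: \Cref{thm:construct_seq_pi_backdoor} requires verifying that the Markov-Boundary-plus-boundary-actions construction actually satisfies both conditions of \Cref{def:seq_pi_backdoor} at every action, including checking the containment $\bm{Z}_i\subseteq\before(X'_i)$ so each policy only uses admissible covariates; and \Cref{thm:biggestmfactor} requires a maximality/necessity argument showing \textsc{FindOx} never omits an action for which some valid backdoor arrangement was possible—presumably an induction in reverse temporal order mirroring the recursive structure of the algorithm, leaning on \Cref{lemma:markovboundary} and the c-component characterization of \textsc{HasValidAdjustment}. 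I would therefore keep the proof of \Cref{thm:alghasmfactor} short, explicitly citing the correspondence $\bm{X}'=\bm{X}$ in the forward direction and the contrapositive of \Cref{thm:biggestmfactor} in the backward direction, and defer all structural verification to the lemma proofs.
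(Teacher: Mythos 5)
Your proof is correct and matches the paper's own argument: the paper likewise obtains \Cref{thm:alghasmfactor} by combining \Cref{thm:construct_seq_pi_backdoor} (applied with $\bm{X}'=\bm{O}^X\cap\bm{X}=\bm{X}$ for sufficiency) with \Cref{thm:biggestmfactor} (instantiated at $\bm{X}''=\bm{X}$ for necessity), deferring all structural work to those two lemmas. Your handling of the two directions, including the contrapositive packaging of the necessity half, is exactly the intended two-line combination.
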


\section{Necessity of Sequential $\pi$-Backdoor for Imitation}

% 
%Thus far, we only showed sufficiency of the sequential $\pi$-backdoor for imitation. 
In this section, we show that the sequential $\pi$-backdoor is \textit{necessary} for imitability, meaning that the sequential $\pi$-backdoor is complete.

A given imitation problem can have multiple possible conditioning sets satisfying the sequential $\pi$-backdoor, and a violation of the criterion for one set does not preclude the existence of another that satisfies the criterion. To avoid this issue, we will use the output of Algorithm \textsc{FindOx}, which returns a unique set $\*O^X$ for each problem:
\begin{restatable}{lemma}{notimitablealgorithm}
    \label{thm:notimitablealgorithm}
    Let $\bm{O}^{X}:=\textsc{FindOx}(\mathcal{G},\bm{X},Y)$. Suppose $\exists X_i \in \*X$ s.t. $X_i \in \bm{X}\setminus \bm{O}^{X}$. Then $\bm{X}$ is not imitable with respect to $Y$ in $\mathcal{G}$.
\end{restatable}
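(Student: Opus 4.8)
The plan is to prove non-imitability directly by an adversarial construction, working from the negation of \Cref{def:imitable}: to show $\bm{X}$ is not imitable it suffices to fix an observational distribution $P(\*O)$ and exhibit SCMs compatible with $\1G$ that realize it for which no single policy $\pi$ discernible from $P(\*O)$ can satisfy $P(Y)_M = P(Y\mid\doo(\pi))_M$. The first step is to convert the \emph{algorithmic} failure ``$X_i\in\bm{X}\setminus\bm{O}^X$'' into a \emph{graphical} obstruction. By \Cref{thm:biggestmfactor}, no sequential $\pi$-backdoor for any subset of $\bm{X}$ can contain $X_i$; tracing why \textsc{FindOx} never inserts $X_i$ (its \textsc{HasValidAdjustment} independence test fails inside the c-component subgraph $\1G_{\bm{C}}$ of $\1G^Y$ no matter which earlier action is tried), I would extract an active, confounded path connecting $X_i$ to $Y$ that cannot be blocked by any $\bm{Z}_i\subseteq\before(X_i)$, even after the \emph{most favorable} future-action manipulations encoded in $\1G'_i$ — namely those induced by the Markov-boundary/boundary-action policies of $\bm{O}^X$ (\Cref{thm:construct_seq_pi_backdoor}). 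Intuitively, this says $X_i$ remains in the ancestral c-component $\bm{C}(Y)$ of $\1G'_i$ for every admissible choice of downstream policy.

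The core of the argument is then an XOR-chain adversary built along this obstruction, mirroring the examples accompanying \Cref{fig:nosequential,fig:futurecond,fig:nodescX}. Each unobserved confounder on the path is set to $\mathrm{Bern}(1/2)$, and the structural functions along the path are chosen as XORs arranged so that every latent enters the formula for $Y$ an even number of times: once through the expert's mechanism at $X_i$ (which may read the confounder) and once through the other arm of the confounding path into $Y$. In the expert's model the latents cancel pairwise, forcing $Y$ to a deterministic value ($Y=1$), whereas an imitator replacing $f_{X_i}$ by any $\pi_i(X_i\mid\bm{Z}_i)$ with $\bm{Z}_i\subseteq\before(X_i)$ has no access to that confounder at decision time, so the cancellation breaks and the offending XOR bit degenerates to an independent fair coin, giving $\E[Y\mid\doo(\pi)]\le 1/2<1$.

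Two verifications close the proof. First, \emph{indistinguishability}: the construction must reproduce a fixed $P(\*O)$ in which the confounder-induced correlations are invisible — each observed XOR of independent fair bits is itself uniform, and the relevant observed marginals and independences are preserved — so the imitator's policy is pinned down by $P(\*O)$ and cannot covertly recover the latent; instantiating a second copy of the SCM with a flipped latent sign (inducing the same $P(\*O)$) then rules out any ``lucky'' policy and delivers the universal quantifier over $\pi$. Second, I must confirm the model is compatible with $\1G$ (only edges present in $\1G$ are used) and that the downstream policies are exactly those realizing $\1G'_i$, so the leak through $X_i$ genuinely survives once all of $\bm{X}$ is controlled by the imitator. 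The main obstacle is the first step: cleanly reading off, from the failed \textsc{HasValidAdjustment} test inside the c-component subgraph, a single unblockable confounded path admitting a consistent XOR assignment, while simultaneously guaranteeing that this assignment leaves the full observational distribution unchanged — the bookkeeping that the latents cancel for the expert but for no $\before(X_i)$-measurable policy is precisely where the definition of $\bm{O}^X$ and the sequential structure must be used with care.
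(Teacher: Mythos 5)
Your overall strategy---turning the failure of \textsc{FindOx} into a graphical obstruction via \Cref{thm:biggestmfactor} and then defeating every policy with an adversarial SCM that induces a fixed $P(\*O)$---has the same shape as the paper's argument, but the core gadget you rely on is insufficient, and the paper's own examples show where it breaks. Consider \Cref{fig:nodescX} with temporal order $(X_1,Z,W,X_2,Y)$, a canonical instance of this lemma. The XOR adversary you describe---$U_1,U_2\sim Bern(0.5)$, $Z:=U_1$, expert $X_1:=U_1$, $W:=Z\oplus U_2$, $X_2:=W$, $Y:=\lnot(X_1\oplus X_2\oplus U_2)$, so that each latent enters $Y$ an even number of times---is defeated by the imitator: at the time of acting on $X_2$ the imitator has already observed $Z$, $W$, and its own $X_1$, so it can reconstruct $U_2=W\oplus Z$ and play $X_2:=X_1\oplus W\oplus Z$ (a policy consistent with $P(\*O)$, since observationally $X_1=Z$ and hence $X_1\oplus W\oplus Z=W=X_2$), achieving $\E[Y|\doo(\pi)]=1$. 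Your claim that ``the offending XOR bit degenerates to an independent fair coin'' is therefore false whenever an observed mediator on the confounded path, combined with the imitator's own past actions, exposes the needed parity before the last relevant action. Nor does your ``flipped latent sign'' second copy repair this: any SCM in which $W$ is an invertible function of $(Z,U_2)$ leaves $U_2$ recoverable, and the compensating policy still imitates. This is exactly why the paper abandons XOR for this case and uses an indexing (one-time-pad) construction, $U_2\sim(Bern(0.5),Bern(0.5))$, $W:=U_2[Z]$, $X_2:=W$, $Y:=(U_2[X_1]==X_2)$: observing $U_2[Z]$ carries no information about the other component of $U_2$, so no downstream policy can repair a wrong guess at $X_1$, and this single SCM (which induces the same $P(\*O)$ as the XOR model) defeats all policies simultaneously.

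Second, the step you yourself flag as the main obstacle---reading off from the failed \textsc{HasValidAdjustment} test ``a single unblockable confounded path'' admitting a consistent adversarial assignment---is not established, and in general no single path suffices. The event $X_i\notin\*O^X$ is a statement about the c-component of $X_i$ in $\1G^Y$ interacting with \emph{every} admissible choice of downstream adjustment sets (every graph $\1G'_i$), including choices under which later actions shield $Y$ through condition (2) of \Cref{def:seq_pi_backdoor}; a complete construction must be carried out over this c-component structure, selecting for each latent a gadget that cannot be inverted from the observables generated along the way (parity gadgets are safe only when, as in \Cref{fig:nosequential}, the latent's observed children mix it with other unrecoverable latents). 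As written, your argument establishes non-imitability only in the special cases where no observed mediator leaks the latent parity; it does not prove the lemma in the generality stated.
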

Our next proposition establishes the necessity of the sequential $\pi$-backdoor criterion for the imitability of the expert's performance (\Cref{def:imitable}), which follows immediately from  \cref{thm:notimitablealgorithm,thm:alghasmfactor}.
\begin{restatable}{theorem}{seqpibackdoornecessary}
\label{thm:seq_pi_backdoor_necessity}
If there do not exist adjustment sets satisfying the sequential $\pi$-backdoor criterion for $(\mathcal{G},\bm{X},Y)$, then $\bm{X}$ is not imitable with respect to $Y$ in $\mathcal{G}$.
\end{restatable}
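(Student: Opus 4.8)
The plan is to prove this as a direct corollary of the algorithmic characterization in \Cref{thm:alghasmfactor} together with the non-imitability guarantee in \Cref{thm:notimitablealgorithm}, arguing by contraposition on the former and then feeding the resulting witness into the latter. The key observation is that both results are stated relative to the \emph{same} canonical object, namely the output $\bm{O}^{X} := \textsc{FindOx}(\mathcal{G},\bm{X},Y)$, which is deterministic and hence unique for a given $(\mathcal{G},\bm{X},Y)$; this is what lets me chain them without ambiguity about which conditioning sets are under discussion.

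Concretely, I would proceed as follows. First, fix $\bm{O}^{X} := \textsc{FindOx}(\mathcal{G},\bm{X},Y)$ and assume the hypothesis, that no adjustment sets satisfying the sequential $\pi$-backdoor criterion exist for $(\mathcal{G},\bm{X},Y)$. Applying the contrapositive of the ``only if'' direction of \Cref{thm:alghasmfactor} (which asserts that a sequential $\pi$-backdoor exists if and only if $\bm{X}\subseteq\bm{O}^{X}$), I conclude $\bm{X}\not\subseteq\bm{O}^{X}$. Expanding the negated set inclusion yields a witness: there is some $X_i\in\bm{X}$ with $X_i\notin\bm{O}^{X}$, equivalently $X_i\in\bm{X}\setminus\bm{O}^{X}$. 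This is exactly the premise of \Cref{thm:notimitablealgorithm}, whose conclusion is that $\bm{X}$ is not imitable with respect to $Y$ in $\mathcal{G}$, completing the argument.

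Since the statement is a composition of two already-established results, I do not expect a genuine obstacle in this step itself; the substantive work lives in the two ingredients. The only point demanding care is the quantifier bookkeeping: I must ensure that ``there do not exist adjustment sets satisfying the criterion'' is correctly translated through \Cref{thm:alghasmfactor} into the existential ``$\exists\, X_i\in\bm{X}\setminus\bm{O}^{X}$'' that \Cref{thm:notimitablealgorithm} consumes, and that the two results reference the identical $\bm{O}^{X}$ rather than distinct candidate sets. The real difficulty—constructing, for that witness $X_i$, an adversarial SCM compatible with $\mathcal{G}$ that reproduces the observational distribution $P(\bm{O})$ yet on which no single policy $\pi$ matches $P(Y)$—is already discharged inside \Cref{thm:notimitablealgorithm}, so here I may simply invoke it.
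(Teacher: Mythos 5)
Your proposal is correct and takes essentially the same route as the paper, which likewise obtains \Cref{thm:seq_pi_backdoor_necessity} immediately by composing \Cref{thm:alghasmfactor} with \Cref{thm:notimitablealgorithm} through the shared canonical object $\bm{O}^{X} := \textsc{FindOx}(\mathcal{G},\bm{X},Y)$. One trivial nit: the step from ``no sequential $\pi$-backdoor exists'' to $\bm{X}\not\subseteq\bm{O}^{X}$ is the contrapositive of the ``if'' direction of \Cref{thm:alghasmfactor} (that $\bm{X}\subseteq\bm{O}^{X}$ implies existence), not the ``only if'' direction, though this mislabeling is immaterial since you invoke the full biconditional.
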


The proof of  \Cref{thm:notimitablealgorithm} relies on the construction of an adversarial SCM for which $Y$ can detect the imitator's lack of access to the latent variables.
For example, in \Cref{fig:futurecond}, $Z$ can carry information about the latent variable $U$ to $Y$, and is only determined after the decision for the value of $X$ is made. Setting $U\sim Bern(0.5), X := U, Z := U, Y := X \oplus Z$ leaves the imitator with a performance of $\E[\hat Y]=0.5$, while the expert can get perfect performance ($\E[Y]=1$).

Another example with similar mechanics can be seen in \Cref{fig:relevantnode}. If the variables are determined in the order $(X_1,W,X_2,Z,Y)$, then the sequence of actions is not imitable, since $Z$ can transfer information about the latent variable $U$ to $Y$, while $X_2$ has no way of gaining information about $U$, because the action at $X$ needed to be taken without context.

Finally, observe \Cref{fig:nodescX}. If $Z$ is determined \textit{after} $X_1$, the imitator must guess a value for $X_1$ without this side information, which is then combined with $U_2$ at $W$. An adversary can exploit this to construct a distribution where guessing wrong can be detected at $Y$ as follows: $U_1 \sim Bern(0.5)$, $Z,X :=U_1$, $U_2 \sim (Bern(0.5),Bern(0.5))$ (that is, $U_2$ is a tuple of two binary variables, or a single variable with a uniform domain of $0,1,2,3$). Then setting $W= U_2[Z]$ ($[]$ represents array access, meaning first element of tuple if $Z=0$ and second if $Z=1$), and $X_2:=W$, $Y:= (U_2[X_1]==X_2)$ gives $\E[Y]=1$ only if $\pi_{1}$ guesses the value of $U_1$, meaning that the imitator can never achieve the expert's performance. This construction also demonstrates non-imitability when $X_1$ and $Z$ are switched (i.e., \Cref{fig:relevantnode} with $W\leftrightarrow Y$ added, and $X_1$ coming before $Z$ in temporal order).

Due to these results, after running \Cref{alg:seq_pi_backdoor} on the domain's causal structure, the imitator gets two pieces of information:
\begin{enumerate}
    \item Is the problem imitable? In other words, is it possible to use only observable context variables, and still get provably optimal imitation, despite the expert and imitator having different information?
    \item If so, what context should be included in each action? Including/removing certain observed covariates in an estimation procedure can lead to different conclusions/actions, only one of which is correct (known as ``Simpson's Paradox'' in the statistics literature \cite{pearlCausalityModelsReasoning2000}). Furthermore, as demonstrated in \Cref{fig:relevantnode}, when performing actions sequentially, some actions might not be imitable themselves ($X_1$ if $Z$ after $X_1$), which leads to bias in observed descendants ($W$) - the correct context takes this into account, using only covariates known not to be affected by incorrectly guessed actions.
\end{enumerate}
Finally, the obtained context $\*Z_i$ for every action $X_i$ could be be used as input to existing algorithms for behavioral cloning, giving an imitating policy with an unbiased result.
%The result can then be used as input to existing algorithms for behavioral cloning, giving an unbiased result.
\section{Simulations}

We performed 2 experiments 
(for full details, refer to \cite[Appendix B]{appendix}), %\Cref{sec:appendix_experiments}), 
comparing the performance of 4 separate approaches to determining which variables to include in an imitating policy:
\begin{enumerate}
    \item \textbf{All Observed (AO)} - Take into account all variables available to the imitator at the time of each action. This is the approach most commonly used in the literature.
    \item \textbf{Observed Parents (OP)} - The expert used a set of variables to take an action - use the subset of these that are available to the imitator.
    \item \textbf{$\pi$-Backdoor} - In certain cases, each individual action can be imitated independently, so the individual single-action covariate sets are used.
    \item \textbf{Sequential $\pi$-Backdoor (ours)} - The method developed in this paper, which takes into account multiple actions in sequence.
\end{enumerate}

\begin{wrapfigure}{r}{0.4\textwidth}
  \begin{center}
    \includegraphics[width=0.38\textwidth]{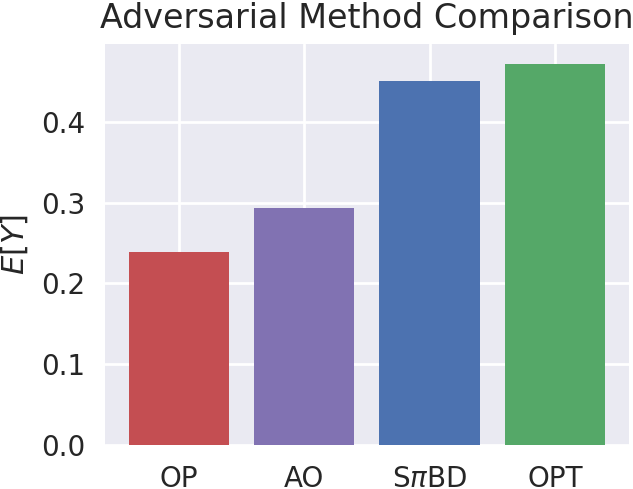}
  \end{center}
  \caption{\label{fig:highd_resuls}Results of applying supervised learning techniques to continuous data with different sets of variables as input at each action. OPT is the ground truth expert's performance, S$\pi$BD represents our method, AO is all observed, and OP represents observed parents.}
\end{wrapfigure}

\begin{table}[t]
    \centering
    \begin{tabular}[t]{c @{\hspace{.5\tabcolsep}} c @{} c | c c c c}
        \toprule
        \# & Structure & Order & \textbf{Seq. $\pi$-Backdoor} & $\pi$-Backdoor & Observed Parents & All Observed \\
        \midrule
        1  & \begin{minipage}{2cm}
\begin{adjustbox}{max totalsize={.99\textwidth}{1cm},center}
\centering
\begin{tikzpicture}
            \node[vertex,blue,thick] (x1) at (0,0) {$X_1$};
            \node[vertex,blue,thick] (x2) at (1,0) {$X_2$};
            \node[vertex,red,thick,dashed] (y) at (2,0) {$Y$};

            \node[vertex] (z) at (1,1) {$Z$};

            \path (x1) edge (x2);
            \path (x2) edge (y);

            \path[bidirected] (z) edge[bend left=30] (y);
            \path[bidirected] (z) edge[bend right=30] (x1);
            \path[bidirected] (z) edge[bend right=50] (x2);

        \end{tikzpicture}
\end{adjustbox} 
\end{minipage} & \begin{adjustbox}{max totalsize={.7cm}{1cm}}
$
\begin{gathered}
Z,X_1,\\
X_2,Y\\
\end{gathered}
$
\end{adjustbox} & $0.04\pm0.04\%$                       & $0.04\pm0.03\%$                   & $0.05\pm0.04\%$  & {\color{red}$\bm{0.13\pm0.18\%}$}                             \\ % DATA 1
        2  & \begin{minipage}{2cm}
\begin{adjustbox}{max totalsize={.99\textwidth}{1cm},center}
\centering
        \begin{tikzpicture}
            \node[vertex,blue,thick] (x1) at (0,0) {$X_1$};
            \node[vertex,blue,thick] (x2) at (1,0) {$X_2$};

            \node[vertex,dashed,red,thick] (y) at (2,0) {$Y$};

            \node[vertex] (z) at (1,1) {$Z$};
    \path (x1) edge (x2);
            \path (x2) edge (y);
            \path (z) edge[bend left=30] (y);

            \path[bidirected] (x1) edge[bend left=40] node[el,above] {$U_1$} (z);
        \end{tikzpicture}
\end{adjustbox} 
\end{minipage} & \begin{adjustbox}{max totalsize={.7cm}{1cm}}
$
\begin{gathered}
\bm{Z,X_1,}\\
\bm{X_2,Y}
\end{gathered}
$
\end{adjustbox} & $0.05\pm 0.03\%$                       & $0.05\pm 0.03\%$                  & {\color{red}$\bm{0.20\pm0.25\%}$} &  $0.05\pm 0.03\%$                                \\ % DATA 1
3  & \begin{minipage}{2cm}
\begin{adjustbox}{max totalsize={.99\textwidth}{1cm},center}
\centering
        \begin{tikzpicture}
            \node[vertex,blue,thick] (x1) at (0,0) {$X_1$};
            \node[vertex,blue,thick] (x2) at (1,0) {$X_2$};

            \node[vertex,dashed,red,thick] (y) at (2,0) {$Y$};

            \node[vertex] (z) at (1,1) {$Z$};
    \path (x1) edge (x2);
            \path (x2) edge (y);
            \path (z) edge[bend left=30] (y);

            \path[bidirected] (x1) edge[bend left=40] node[el,above] {$U_1$} (z);
        \end{tikzpicture}
\end{adjustbox} 
\end{minipage} & \begin{adjustbox}{max totalsize={.7cm}{1cm}}
$
\begin{gathered}
\bm{X_1,Z,}\\
\bm{X_2,Y}
\end{gathered}
$
\end{adjustbox} & $0.04\pm 0.03\%$                       & {\color{red}\textbf{Not Imitable}}                  & {\color{red}$\bm{0.27\pm0.40\%}$} & {\color{red}$\bm{0.26\pm0.39\%}$}                              \\ % DATA 1
4  & \begin{minipage}{2cm}
\begin{adjustbox}{max totalsize={.99\textwidth}{1cm},center}
\centering
\begin{tikzpicture}
            \node[vertex,blue,thick] (x1) at (0,1) {$X_2$};
            \node[vertex,blue,thick] (x2) at (-0.5,2) {$X_1$};
            \node[vertex] (z) at (-1,1) {$Z$};
            \node[vertex,dashed,red,thick] (y) at (1,1) {$Y$};

            \path (x1) edge (y);
            \path[bend left=30] (x2) edge (y);
            \path (z) edge (x1);
            \path[bidirected] (x2) edge[bend right=40] node[left] {$U_1$} (z);
            \path[bidirected] (z) edge[bend left=40] node[el,above] {$U_2$} (y);
        \end{tikzpicture}
        \end{adjustbox} 
\end{minipage} & \begin{adjustbox}{max totalsize={.7cm}{1cm}}
$
\begin{gathered}
X_1,Z,\\
X_2,Y
\end{gathered}
$
\end{adjustbox} & Not Imitable                       & Not Imitable                  & {\color{red}$\bm{0.19\pm0.29\%}$} & {\color{red}$\bm{0.19\pm0.29\%}$}                              \\ % DATA 2
        \bottomrule
        \\
    \end{tabular}
    
    \caption{Values of $|\E[Y]-\E[\hat Y]|$ from behavioral cloning using different contexts in randomly sampled models consistent with each causal graph. Cases with incorrect imitation are shown in {\color{red}red}.}
    \label{tbl:simulations}
\end{table}

The first simulation consists of running behavioral cloning on randomly sampled distributions consistent with a series of causal graphs designed to showcase aspects of our method. For each causal graph, 10,000 random discrete causal models were sampled, representing the environment as well as expert performance, and then the expert's policy $\*X$ was replaced with imitating policies approximating $\pi(X_i) =P(X_i|ctx(X_i))$, with context $ctx$ determined by each of the 4 tested methods in turn.
Our results are shown in \Cref{tbl:simulations}, with causal graphs shown in the first column, temporal ordering of variables in the second column, and absolute distance between expert and imitator for the 4 methods in the remaining columns. 

In the first row, including $Z$ when developing a policy for $\*X$ leads to a biased answer, which makes the average error of using all observed covariates (red) larger than just the sampling fluctuations present in the other columns. 
Similarly, $Z$ needs to be taken into account in row 2, but it is not explicitly used by $\*X$, so a method relying only on observed parents leads to bias here. In the next row, $Z$ is not observed at the time of action $X_1$, making the $\pi$-backdoor incorrectly claim non-imitability. Our method recognizes that $X_2$'s policy can fix the error made at $X_1$, and is the only method that leads to an unbiased result. Finally, in the 4th row, the non-causal approaches have no way to determine non-imitability, and return biased results in all such cases.

The second simulation used a synthetic, adversarial causal model, enriched with continuous data from the HighD dataset \cite{highDdataset} altered to conform to the causal model, to demonstrate that different covariate sets can lead to significantly different imitation performance. A neural network was trained for each action-policy pair using standard supervised learning approaches, leading to the results shown in \Cref{fig:highd_resuls}. The causal structure was not imitable from the single-action setting, so the remaining 3 methods were compared to the optimal reward, showing that our method approaches the performance of the expert, whereas non-causal methods lead to biased results. Full details of model construction, including the full causal graph are given in \cite[Appendix B]{appendix}%\Cref{sec:highd}. 
\section{Limitations \& Societal Impact}
\label{sec:limitations}
There are two main limitations to our approach: (1) Our method focuses on the causal diagram, requiring the imitator to provide the causal structure of its environment. This is a fundamental requirement: any agent wishing to operate in environments with latent variables must somehow encode the additional knowledge required to make such inferences from observations. (2) Our criterion only takes into consideration the causal structure, and not the associated data $P(\*o)$. Data-dependent methods can be computationally intensive, often requiring density estimation. If our approach returns ``imitable", then the resulting policies are guaranteed to give perfect imitation, without needing to process large datasets to determine imitability.

Finally, advances in technology towards improving imitation can easily be transferred to methods used for impersonation - our method provides conditions under which an imposter (imitator) can fool a target ($Y$) into believing they are interacting with a known party (expert). Our method shows when it is provably impossible to detect an impersonation attack. On the other hand, our results can be used to ensure that the causal structure of a domain cannot be imitated, helping mitigate such issues.

\section{Conclusion}

Great care needs to be taken in choosing which covariates to include when determining a policy for imitating an expert demonstrator when expert and imitator have different views of the world. The wrong set of variables can lead to biased, or even outright incorrect predictions. Our work provides general and complete results for the graphical conditions under which behavioral cloning is possible, and provides an agent with the tools needed to determine the variables relevant to its policy.

\section*{Funding Transparency}
 The team is supported in part by funding from the NSF, Amazon, JP Morgan, and The Alfred P. Sloan Foundation, as well as grants from NSF
IIS-1704352 and IIS-1750807 (CAREER).

\bibliography{bibliography.bib}
\bibliographystyle{abbrv}

\end{document}